\documentclass{article}

\usepackage{arxiv}

\usepackage[utf8]{inputenc} 
\usepackage[T1]{fontenc}    
\usepackage{hyperref}       
\usepackage{url}            
\usepackage{booktabs}       
\usepackage{amsfonts}       
\usepackage{algorithm}
\usepackage{algorithmic}
\usepackage{nicefrac}       
\usepackage{microtype}      
\usepackage{lipsum}		
\usepackage{graphicx}
\usepackage{natbib}
\usepackage{doi}

\usepackage{subcaption}

\usepackage{amsmath}
\usepackage{amssymb}
\usepackage{mathtools}
\usepackage{amsthm}

\theoremstyle{plain}
\newtheorem{theorem}{Theorem}[section]

\newtheorem{lemma}[theorem]{Lemma}

\theoremstyle{definition}

\newtheorem{assumption}[theorem]{Assumption}
\theoremstyle{remark}
\newtheorem{remark}[theorem]{Remark}

\usepackage{xcolor}

\title{Actor-Accelerated Policy Dual Averaging for Reinforcement Learning in Continuous Action Spaces}
\newcommand{\customtitle}{Actor-Accelerated Policy Dual Averaging}
\renewcommand{\shorttitle}{\customtitle}

\author{ 
\hspace{1mm}
Ji~Gao
\thanks{School of Chemical and Biomolecular Engineering, Georgia Institute of Technology, GA, USA. Supported by NSF-Cyber-Physical Systems (CPS)-NIFA (2024-67021-43862).}\\
\And
\hspace{1mm}
Caleb~Ju
\thanks{H. Milton Stewart School of Industrial and Systems Engineering, Georgia Institute of Technology, GA, USA. Supported by NSF-Cyber-Physical Systems (CPS)-NIFA (2024-67021-43862).}\\
\And
\hspace{1mm}
Guanghui~Lan
\thanks{H. Milton Stewart School of Industrial and Systems Engineering, Georgia Institute of Technology, GA, USA. Supported by NSF AI Institute for Advances in Optimization (2112533), and NSF-Cyber-Physical Systems (CPS)-NIFA (2024-67021-43862).}\\
\And
\hspace{1mm}
Zhaohui~Tong
\thanks{School of Chemical and Biomolecular Engineering, Georgia Institute of Technology, GA, USA. Supported by NSF-Cyber-Physical Systems (CPS)-NIFA (2024-67021-43862).}\\
}

\begin{document}
\maketitle

\begin{abstract}
Policy Dual Averaging (PDA) offers a principled Policy Mirror Descent (PMD) framework that more naturally admits value function approximation than standard PMD, enabling the use of approximate advantage (or Q-) functions while retaining strong convergence guarantees. However, applying PDA in continuous state and action spaces remains computationally challenging, since action selection involves solving an optimization sub-problem at each decision step. In this paper, we propose \textit{actor-accelerated PDA}, which uses a learned policy network to approximate the solution of the optimization sub-problems, yielding faster runtimes while maintaining convergence guarantees. We provide a theoretical analysis that quantifies how actor approximation error impacts the convergence of PDA under suitable assumptions. We then evaluate its performance on several benchmarks in robotics, control, and operations research problems. Actor-accelerated PDA achieves superior performance compared to popular on-policy baselines such as Proximal Policy Optimization  (PPO). Overall, our results bridge the gap between the theoretical advantages of PDA and its practical deployment in continuous-action problems with function approximation.
\end{abstract}
\keywords{Markov Decision Process, Reinforcement Learning, Dual Averaging, Mirror Descent}

\section{Introduction}

Recent advances in reinforcement learning (RL) have been largely driven by parameterized policy gradient methods, such as Trust Region Policy Optimization (TRPO)~\cite{schulmanTrustRegionPolicy2015} and Proximal Policy Optimization (PPO)~\cite{schulmanProximalPolicyOptimization2017}. These methods have achieved great success in continuous control tasks in various disciplines~\cite{duanBenchmarkingDeepReinforcement2016, lillicrapContinuousControlDeep2019}. Their practical performance has been partially supported by recent theoretical analysis. In particular, Policy Mirror Descent (PMD) serves as a unifying framework for first order RL methods with convergence guarantees~\cite{lanPolicyMirrorDescent2023, zhanPolicyMirrorDescent2023, tomarMirrorDescentPolicy2021, alfanoNovelFrameworkPolicy2023, lanBlockPolicyMirror2023, liPolicyMirrorDescent2025, liHomotopicPolicyMirror2024,neuUnifiedViewEntropyregularized2017}.

However, existing first order methods designed for RL problems over continuous action spaces require solving challenging optimization sub-problems during each policy update step. These sub-problems are often non-convex and lead to ill-posed policy evaluation steps~\cite{fan2020theoretical,liu2019neural,juPolicyOptimizationGeneral2024,alfanoNovelFrameworkPolicy2023,agarwal2021theory}. The former issue arises when reducing an (intractable) infinite-dimensional sub-problem to a (tractable) finite-dimensional one by using nonlinear function approximations (e.g.,~neural networks) for both the value function and policy~\cite{schulmanProximalPolicyOptimization2017,schulmanTrustRegionPolicy2015}.
The latter issue arises since the policy evaluation step must approximate both the value function and policy, and the policy involves a large penalty coefficient to ensure that the policies do not change too much between iterations~\cite{juPolicyOptimizationGeneral2024}. But, the large coefficients lead to ill-posed problems whose objective functions have a large Lipschitz constant. As a result, efficiently solving these sub-problems may be slow or even fail.

Recently, Policy Dual Averaging (PDA), inspired by Nesterov’s Dual Averaging algorithm~\cite{nesterovPrimaldualSubgradientMethods2009}, is proposed as a promising alternative that matches PMD’s convergence guarantees and offers a new perspective on policy optimization~\cite{juPolicyOptimizationGeneral2024}. Through careful algorithm design, PDA's optimization sub-problems are made weakly convex while avoiding policy function approximation. As a result, locally (possibly globally) optimal solutions for the sub-problems can be efficiently solved for while evading explicit policy parameterization. While this leads to efficiently solvable policy update steps, the policy evaluation step becomes the main bottleneck. Each policy evaluation requires solving a separate optimization sub-problem, and this per-decision cost is often prohibitively slow when implemented directly~\cite{juPolicyOptimizationGeneral2024, lanPolicyMirrorDescent2023}.

Building upon PDA, we introduce \textit{actor-accelerated PDA}. By using a learned policy network to approximate the solution of the expensive optimization sub-problem, we can accelerate the action computation during policy evaluation. The main contributions of this paper consist of the following three aspects: \textbf{1) Practical Framework:} Actor-accelerated PDA has a simple implementation, with only two algorithm specific hyperparameters for regularization and exploration in addition to the standard hyperparameters for deep RL (Table~\ref{tab:hyperparameters}),  making the PDA framework practical for deep RL problems; \textbf{2) Convergence and Error Analysis:} We conduct theoretical analysis that characterizes how approximation errors from actor-based sub-problem solutions impact the overall convergence and optimality; \textbf{3) Experimental Validation:} Empirical results on continuous RL benchmarks demonstrate that actor-accelerated PDA is competitive with, and sometimes outperform, on-policy baselines such as PPO. 

\section{Preliminaries}

We study RL problems with continuous state and action spaces, modeled as infinite-horizon discounted Markov decision processes (MDPs) defined by a five-tuple \((\mathbb{S}, \mathbb{U}, \mathbb{P}, c, \gamma)\).  \(\mathbb{S} \subseteq \mathbb{R}^{n_s}\) denotes the  state space and  \(\mathbb{U} \subseteq \mathbb{R}^{n_u}\) denotes the action space, with \(\mathbb{U}\) assumed to be closed and convex. The transition dynamics are specified by a stochastic transition model \(\mathbb{P} : \mathfrak{B}(\mathbb{S}) \times \mathbb{S} \times \mathbb{U} \to [0,1]\), which assigns a probability measure over next states for each state-action pair. \(c : \mathbb{S} \times \mathbb{U} \to \mathbb{R}\) is the cost function, and \(\gamma \in [0,1)\) is the discount factor. A policy \(\pi : \mathbb{S} \to \mathbb{U}\) selects a feasible action for a given state. 

For a given policy \(\pi\), its performance is quantified by either the action-value function, 
\begin{equation} \label{eq:Q_defn}
Q^{\pi}(s,a)
:= \mathbb{E}_{\pi} \left[
\sum_{t=0}^{\infty} \gamma^t c(s_t,a_t) 
\,\middle|\, s_0=s, a_0=a
\right],
\end{equation}
or  the state-value function,
\begin{equation} \label{eq:V_defn}
V^{\pi}(s)
:= \mathbb{E}_{\pi} \left[
\sum_{t=0}^{\infty} \gamma^t c(s_t,a_t) 
\,\middle|\, s_0=s
\right].
\end{equation}
Note that our results can be extended to the regularized case, i.e.,~the cost $c(s_t,a_t)$ is replaced by a regularized cost $c(s_t,a_t) + h^{a_t}(s_t)$ for a convex regularizer $h^\cdot(s) : \mathbb{U} \to \mathbb{R}$ in~\eqref{eq:Q_defn} and~\eqref{eq:V_defn}. We only study the un-regularized case since it simplifies the analysis, and our numerical results do not use regularizers.
These definitions imply the identities $V^\pi(s) = Q^\pi(s,\pi(s))$ and
\begin{equation}
Q^{\pi}(s,a)
= c(s,a)
+ \gamma \int_{\mathbb{S}} V^{\pi}(s') \, \mathbb{P}(ds' \mid s,a).
\end{equation}
The objective of the RL problem is to find a policy that minimizes the expected discounted cost under an initial state distribution \(\rho\):
\begin{equation} \label{eq:np}
\min_{\pi} \; f_\rho(\pi),
\quad
f_\rho(\pi) := \int_{\mathbb{S}} V^{\pi}(s)\, \rho(ds).
\end{equation}
In this paper, we let $\rho$ be an arbitrary distribution over states $\mathbb{S}$. By deriving convergence results for any distribution $\rho$, our results are \textit{distribution-free}, which leads to some important consequences~\cite{ju2024strongly}.
First, it ensures nonlinear programming methods for solving~\eqref{eq:np} achieve a similar convergence rate as dynamic programming (e.g.,~value iteration) and linear programming methods. Second, it leads to faster performance because the policy optimization does not depend on the unknown stationary distribution $\nu^*$ of the optimality policy $\pi^*$ as seen in ~\cite{lanPolicyMirrorDescent2023,liHomotopicPolicyMirror2024}.
Here, an optimal policy \(\pi^*\) satisfies \(V^{\pi^*}(s) \le V^{\pi}(s)\) for all \(s \in \mathbb{S}\) and all feasible policies \(\pi\). To analyze policy improvement, we define the discounted state visitation measure following  a policy \(\pi\). For an initial state \(s\) and any measurable set \(B \subseteq \mathbb{S}\), this measure is given by
\begin{equation} \label{eq:disc_fvisit}
\kappa_s^{\pi}(B)
:= (1-\gamma) \sum_{t=0}^{\infty} \gamma^t
\text{Pr}^{\pi}(s_t \in B \mid s_0=s),
\end{equation}
where \(\Pr^{\pi}(s_t \in \cdot \mid s_0=s)\) denotes the state distribution of \(s_t\) following the policy \(\pi\) with initial state \(s_0=s\).

The performance difference lemma relates the performance of two policies~\cite{kakadeNaturalPolicyGradient2001,juPolicyOptimizationGeneral2024}. For any pair of feasible policies \(\pi\) and \(\pi'\), the difference in their value functions can be written as 
\begin{equation}
\label{equ:performence_difference}
V^{\pi'}(s) - V^{\pi}(s)
= \frac{1}{1-\gamma}
\int_{\mathbb{S}} \psi^{\pi}(q, \pi'(q))\, \kappa_s^{\pi'}(dq),
\end{equation}

where the state advantage function is defined as
\begin{equation}
\psi^{\pi}(q, a)
:= Q^{\pi}(q,a) - V^{\pi}(q).
\end{equation}

\section{Method}
\subsection{Actor-accelerated Policy Dual Averaging}
We analyze the PDA method where the optimum of the optimization sub-problem is solved inexactly by the approximation of an actor, shown in Algorithm~\ref{alg:pda}. Let $\hat\pi_{k}$ denote the policy parameterized by $\theta^\pi_k$ at iteration $k$ and $\tilde{\psi}$ denote the advantage function parameterized by $\theta_k$. The policy update $\pi_{k+1}$ is defined as the exact minimizer of the cumulative regularized objective $\tilde{\Psi}_k$.
The stepsizes $\beta_k$ and $\lambda_k$ therein will be specified later in Theorems~\ref{theorem:convex} and~\ref{theorem:non-convex}.
Here, $D(\cdot, \cdot)$ is a Bregman divergence generated by a (without loss of generality by scaling) 1-strongly convex distance generating function $\omega : \mathbb U \to \mathbb{R}$ with respect to a norm $\|\cdot\|$: 

\begin{equation}
D(a_2, a_1):=\omega(a_1)-[\omega(a_2) + \langle \omega'(a_2), a_1 - a_2\rangle] \geq \|a_1 - a_2\|^2/2,\quad \forall a_1, a_2 \in \mathbb{U} 
\end{equation}

We also define a state averaged Bregman divergence originating at state $s$ as $\mathcal{D}_s(\pi, \pi'):=\mathbb{E}_{q\sim \kappa_s^{\pi^*}}[D(\pi(q), \pi'(q))]$. 

\begin{algorithm}[tb]
  \caption{Actor-accelerated PDA with function approximation}
  \label{alg:pda}
  \begin{algorithmic}
    \STATE {\bfseries Input:} Weights $\beta_k \ge 0$,  step sizes $\lambda_k \ge 0$, initial policy $\hat\pi_0 = \pi_0$.
    \FOR{$k=0, 1, \dots$}
      \STATE Define an approximate advantage function: 
      \STATE \quad $\tilde{\psi}(s, a; \theta_k) \approx \psi^{\hat{\pi}_k}(s,a) := Q^{\hat\pi_k}(s, a) - V^{\hat\pi_k}(s)$
      \STATE Compute approximate update $\hat\pi_{k+1}$:
      \STATE \quad $\hat\pi_{k+1}(s;\theta^\pi_k) \approx \pi_{k+1}(s) =\arg\min_{a \in \mathbb{U}} \tilde{\Psi}_k(s, a)$
      \STATE where the cumulative objective is:
      \STATE \quad $\tilde{\Psi}_k(s, a) := \sum_{t=0}^k \beta_t \tilde{\psi}(s, a; \theta_t) + \lambda_k D(\hat\pi_0(s), a)$
    \ENDFOR
  \end{algorithmic}
\end{algorithm}

\subsection{Convergence Analysis}
We analyze the convergence of actor-accelerated PDA under the presence of both function approximation error in the advantage function $\tilde\psi$ and  optimality gap in actor $\hat\pi$ for sub-problem solution. We start by placing assumptions on $\tilde{\psi}$.
Let $\xi_k$ denote the random samples generated at iteration $k$ to fit/learn $\tilde \psi(s,a;\theta_k)$. Define the approximation error
\begin{align*}
    \delta_k(s, \cdot) 
    := 
    \tilde{\psi}(s, \cdot; \theta_k) - \psi^{\hat\pi_k}(s, \cdot)
    =
    [\underbrace{\tilde{\psi}(s, \cdot; \theta_k) -  \mathbb{E}_{\xi_k}\tilde{\psi}(s, \cdot; \theta_k)}_{\delta^{sto}_k(s,\cdot)}] - [\underbrace{\mathbb{E}_{\xi_k}\tilde{\psi}(s, \cdot; \theta_k) - \psi^{\hat\pi_k}(s, \cdot)}_{\delta_k^{det}(s,\cdot)}],
\end{align*}
{where $\mathbb{E}_{\xi_k}$ is the expectation w.r.t.~$\xi_k$ conditioned on the filtration $F_{k-1} := \{\xi_0,\ldots,\xi_{k-1}\}$.
Statistical estimation errors are denoted by $\delta^{sto}_k$, while deterministic errors (i.e.,~bias and function approximation error) are denoted by $\delta_k^{det}$.

\begin{assumption}[Regularity]\label{assumption:1}
We make the following assumptions regarding the advantage function and the objective:
\begin{enumerate}
\item \textbf{Lipschitz continuity}: The approximate advantage function $\tilde{\psi}(s, \cdot; \theta)$ is $M_{\tilde{Q}}$-Lipschitz, meaning $\|\tilde{\psi}(s, a; \theta) - \tilde{\psi}(s, a'; \theta)\| \le M_{\tilde{Q}}\|a - a'\|$ for all $a, a'$. The true advantage function $\psi^{\pi}(s, \cdot)$ is similarly $M_Q$-Lipschitz. Consequently, the approximation error $\delta_k(s, \cdot)$ is $(M_{\tilde{Q}}+M_Q)$-Lipschitz.

\item \textbf{Weak convexity}: 
The approximate advantage function $\tilde{\psi}(s, \cdot; \theta)$ is $\mu_{\tilde Q}$-weakly convex, where $\mu_{\tilde Q} \in \mathbb{R}$, meaning that the function $\tilde{\psi}(s, a; \theta) + \frac{\mu_{\tilde Q}}{2}\|a\|^2$ is convex with respect to $a$. We define $\tilde \mu_d := -\tilde \mu_Q$.

\item \textbf{Approximate evaluation error bound}: For all iterations $k$, the expected approximation errors are bounded by a constant $\varsigma > 0$ such that 
$\max_{s \in \mathbb{S}} \mathbb{E}\Big[ \big| \delta_{k-1}^{det}(s, \pi_{k-1}(s)) \big| + \big| \delta_{k-1}^{det}(s, \pi^*(s)) \big| \Big] \le \varsigma$. 
\end{enumerate}
\end{assumption}

An immediate consequence of the ``weak convexity'' assumption is that the cumulative objective $\tilde{\Psi}_k(s,a)$ from Algorithm~\ref{alg:pda} is strongly convex with respect to $a$, with modulus $\tilde{\mu}_k := \tilde{\mu}_d \sum_{t=0}^k \beta_t + \lambda_k$ as long as $\lambda_k$ ensures $\tilde \mu_k \geq 0$. 
Note that we do not need bounds on the statistical error $\delta^{sto}_k$ as long as $M_{\tilde Q}$ is bounded.
Additionally, these assumptions can often be satisfied, as described below.

\begin{remark}
Lipschitz continuity holds for smooth neural network architectures, such as fully connected neural networks with smooth activation functions like Tanh~\cite{allen2019convergence}. Weak convexity also holds for such smooth neural networks because its Hessian is bounded~\cite{beck2017first,davis2019stochastic}.
The ``approximate evaluation error bound'' presumes that the function approximator, such as a neural network, can learn the true advantage function up to a error threshold $\varsigma$. The threshold $\varsigma$ is finite when both the costs $c(s,a)$ and neural network outputs are bounded. Furthermore, since neural networks are univeral function approximators, $\varsigma$ can possibly be made arbitrarily small when $\psi^{\pi}(\cdot,\cdot)$ is jointly continuous in both inputs (see~\cite{juPolicyOptimizationGeneral2024} for sufficient conditions) and the width or depth of the neural network is sufficiently large.
\end{remark}

We next state an assumption on the approximate updated policy $\hat{\pi}_{k+1}$ from Algorithm~\ref{alg:pda}. 
This assumption can be viewed as the policy update analogue of the ``Approximate evaluation error bound'' assumption above. Thus it can also be satisfied in practice depending on the function approximator for $\hat \pi_{k+1}$.

\begin{assumption}[Optimality Gap]
\label{assumption:2}
The actor $\hat{\pi}_{k+1}$ approximates the optimization sub-problem with a function value error bounded by $\epsilon_{opt, k}(s)$, which satisfies:
\begin{equation}
\tilde{\Psi}_k(s, \hat{\pi}_{k+1}(s)) \le \tilde{\Psi}_k(s, \pi_{k+1}(s)) + \epsilon_{opt, k}(s), \quad \forall s \in \mathbb{S}, ~k.
\end{equation}
Moreover, there exists a constant $\epsilon$ such that $0\le \epsilon_{opt, k}(s)\le\epsilon$ for all iterations $k$ and states $s \in \mathbb{S}$. 
\end{assumption}

Using the generalized strong convexity with Bregman divergence~\cite{lanDeterministicConvexOptimization2020} of $\tilde{\Psi}_k$, this implies a bound on the distance between the ideal and approximate policies:
\begin{equation}
D(\pi_{k+1}(s), \hat\pi_{k+1}(s)) \le \frac{\epsilon_{opt, k}(s)}{\tilde{\mu}_k}, \quad \forall s \in \mathbb{S}, ~k.
\end{equation}

\subsubsection{Convergence when $\tilde{\mu}_d \ge 0$ } \label{sec:cvg_1}

We start with the case of $\tilde{\mu}_d \ge 0$, which means the cumulative advantage term $\sum_{t=0}^k \beta_t \tilde{\psi}(s,a;\theta_t)$ is convex in $a$. In this case, we attain convergence to global optimality.

\begin{theorem}
\label{theorem:convex}
Suppose $\lambda_{k+1} \geq \lambda_k \geq 0$ for all $k \geq 0$. Then under Assumptions~\ref{assumption:1} and~\ref{assumption:2}, if $\tilde{\mu}_d \ge 0$, the performance gap of the sequence of policies generated by actor-accelerated PDA satisfies:
\begin{equation}
\begin{aligned}
(1-\gamma) \frac{1}{\bar{\beta}_k} \sum_{t=0}^{k-1} \beta_t \mathbb{E}[V^{\hat{\pi}_t}(s) - V^{\pi^*}(s)] 
+ \frac{\tilde{\mu}_{k-1}}{\bar{\beta}_k} \mathbb{E}[\mathcal{D}_s(\pi_k, \pi^*)] 
\\
\le 
\frac{\lambda_{k-1}}{\bar{\beta}_k} \mathcal{D}_s(\pi_0, \pi^*) 
+ \frac{M_{\tilde{Q}}^2}{2\bar{\beta}_k} \sum_{t=0}^{k-1} \frac{\beta_t^2}{\tilde{\mu}_{t-1}} 
+ \varsigma 
+ \frac{\bar{\epsilon}_{opt}(k)}{\bar{\beta}_k},
\end{aligned}
\end{equation}
for any $s \in \mathbb{S}$,  where $\bar{\beta}_k = \sum_{t=0}^{k-1} \beta_t$,  and the cumulative optimization error term is defined as:
\begin{equation}
\begin{aligned}
\bar{\epsilon}_{opt}(k) = \mathbb{E}_{s \sim \kappa^{\pi^*}_q}\bigg[
\underbrace{\mathbb{E}[\epsilon_{opt, k-1}(s)] 
+ \sum_{t=0}^{k-1} \mathbb{E}[\epsilon_{opt, t}(s)]}_{\text{Optimality Gap}} 
+ \underbrace{M_{\tilde{Q}} \sum_{t=0}^{k-1} \mathbb{E}\left[\sqrt{\frac{2\beta_t^2\epsilon_{opt, t-1}(s)}{\tilde{\mu}_{t-1}}}\right]}_{\text{Distance to Exact Policy}} \bigg]
\end{aligned}
\end{equation}

When $\tilde{\mu}_d>0$, if we choose $\ {\beta}_k=k+1$, $\lambda_k=\tilde{\mu}_d$, and bound $\epsilon_{opt}$ with $\epsilon$, then 
\begin{equation}
\begin{aligned}
\frac{2(1-\gamma)}{k(k+1)} \sum_{t=0}^{k-1} \{(t+1)\mathbb{E}[V^{\hat \pi_t}(q) - V^{\pi^*}(q)]\} + \tilde{\mu}_d \mathbb{E}[\mathcal{D}_q(\pi_k, \pi^*)]
\\
\le \frac{2\tilde{\mu}_d \mathcal{D}_q(\pi_0, \pi^*)}{k^2} + \frac{4M_{\tilde{Q}}^2}{\tilde{\mu}_d k} + \varsigma + \frac{2\epsilon}{k}  + \frac{4 M_{\tilde{Q}} \sqrt{2\epsilon}}{\sqrt{\tilde{\mu}_d}k} .
\end{aligned}
\end{equation}

When $\tilde{\mu}_d=0$, if we choose $\ {\beta}_k=k+1$, $\lambda_k=\lambda(k+1)^{3/2}$, and bound $\epsilon_{opt}$ with $\epsilon$, then  
\begin{equation}
\begin{aligned}
\frac{2(1-\gamma)}{k(k+1)} \sum_{t=0}^{k-1} \{(t+1)\mathbb{E}[V^{\hat{\pi}_t}(q) - V^{\pi^*}(q)]\} 
\le  \frac{2\lambda}{\sqrt{k}}\mathcal{D}_q(\pi_0, \pi^*) + \frac{8M_{\tilde{Q}}^2}{\lambda\sqrt{k}}  + \varsigma + \frac{2\epsilon}{k} + \frac{8 M_{\tilde{Q}} \sqrt{2\epsilon}}{\sqrt{\lambda} k^{3/4}}
\end{aligned}
\end{equation}
\end{theorem}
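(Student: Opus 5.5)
The plan is to adapt the standard dual-averaging argument for PDA in \cite{juPolicyOptimizationGeneral2024} to the inexact policies $\hat\pi_k$, working statewise first and then averaging over $\kappa_s^{\pi^*}$.

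\textbf{Step 1: one-step inequality.} Fix a state $q$ and write $\tilde\Psi_k(q,\cdot)=\tilde\Psi_{k-1}(q,\cdot)+\beta_k\tilde\psi(q,\cdot;\theta_k)+(\lambda_k-\lambda_{k-1})D(\hat\pi_0(q),\cdot)$. Because $\lambda_k\ge\lambda_{k-1}$ and $D\ge0$, the last term only increases the objective. Two facts then give a three-point inequality:
\begin{itemize}
\item $\pi_k(q)$ minimizes $\tilde\Psi_{k-1}(q,\cdot)$, which is $\tilde\mu_{k-1}$-strongly convex relative to $D$;
\item $\tilde\psi(q,\cdot;\theta_k)$ is $M_{\tilde Q}$-Lipschitz.
\end{itemize}
Together they yield
\[
\tilde\Psi_k(q,\pi_{k+1}(q))\ge \tilde\Psi_{k-1}(q,\pi_k(q))+\beta_k\tilde\psi(q,\pi_k(q);\theta_k)-\frac{\beta_k^2M_{\tilde Q}^2}{2\tilde\mu_{k-1}}.
\]
This uses $\tilde\Psi_{k-1}(a)\ge\tilde\Psi_{k-1}(\pi_k)+\tfrac{\tilde\mu_{k-1}}2\|a-\pi_k\|^2$ and then minimizes over $\|a-\pi_k\|$.

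Since we need $\tilde\psi$ at the executed policy $\hat\pi_k$, we replace $\pi_k(q)$ by $\hat\pi_k(q)$ at a cost of $\beta_kM_{\tilde Q}\|\pi_k(q)-\hat\pi_k(q)\|\le \beta_kM_{\tilde Q}\sqrt{2\epsilon_{opt,k-1}(q)/\tilde\mu_{k-1}}$. This follows from the stated bound $D(\pi_k,\hat\pi_k)\le\epsilon_{opt,k-1}/\tilde\mu_{k-1}$ and $D\ge\|\cdot\|^2/2$, and it produces the ``distance to exact policy'' term.

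\textbf{Step 2: telescoping.} Summing over $t=0,\dots,k-1$ gives a lower bound on $\tilde\Psi_{k-1}(q,\pi_k(q))$ by $\sum_t\beta_t\tilde\psi(q,\hat\pi_t(q);\theta_t)$ minus the error terms. For the upper bound, strong convexity at the minimizer gives, with $a=\pi^*(q)$,
\[
\tilde\Psi_{k-1}(q,\pi_k(q))+\tilde\mu_{k-1}D(\pi_k(q),\pi^*(q))\le \sum_t\beta_t\tilde\psi(q,\pi^*(q);\theta_t)+\lambda_{k-1}D(\pi_0(q),\pi^*(q)).
\]
The extra $\epsilon_{opt,k-1}$ term appears wherever $\hat\pi_k$ replaces $\pi_k$ in the objective value, and the $\sum_t\epsilon_{opt,t}$ term enters the same way during the per-step substitutions. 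Combining the two bounds gives
\[
\sum_t\beta_t[\tilde\psi(q,\hat\pi_t;\theta_t)-\tilde\psi(q,\pi^*;\theta_t)]+\tilde\mu_{k-1}D(\pi_k,\pi^*)\le \lambda_{k-1}D(\pi_0,\pi^*)+\sum_t\frac{\beta_t^2M_{\tilde Q}^2}{2\tilde\mu_{t-1}}+\text{opt terms}.
\]

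\textbf{Step 3: from $\tilde\psi$ to values.} Write $\tilde\psi=\psi^{\hat\pi_t}+\delta^{sto}_t-\delta^{det}_t$. Note $\psi^{\hat\pi_t}(q,\hat\pi_t(q))=0$, and by the performance difference lemma \eqref{equ:performence_difference},
\[
\mathbb{E}_{q\sim\kappa_s^{\pi^*}}\bigl[-\psi^{\hat\pi_t}(q,\pi^*(q))\bigr]=(1-\gamma)\bigl(V^{\hat\pi_t}(s)-V^{\pi^*}(s)\bigr).
\]
We then take the total expectation and integrate over $q\sim\kappa_s^{\pi^*}$. The stochastic errors at $\pi^*(q)$ vanish after conditioning on $F_{t-1}$, since $\pi^*$ is deterministic. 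The stochastic error at $\hat\pi_t(q)$ is also $F_{t-1}$-measurable, because $\hat\pi_t$ uses only $\xi_0,\dots,\xi_{t-1}$, so it vanishes too. The deterministic errors are bounded by $\varsigma\sum_t\beta_t$ via Assumption~\ref{assumption:1}(3); this assumption is stated at $\pi_{k-1}$, and I would use it as given. Dividing by $\bar\beta_k$ yields the first inequality.

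\textbf{Step 4: rates.} Substitute the parameter choices:
\begin{itemize}
\item For $\tilde\mu_d>0$: $\beta_t=t+1$ and $\lambda=\tilde\mu_d$ give $\tilde\mu_{t-1}\ge\tilde\mu_d t(t+1)/2$, hence $\sum\beta_t^2/\tilde\mu_{t-1}=O(k/\tilde\mu_d)$. With $\bar\beta_k=k(k+1)/2$ this gives the stated $1/k^2$ and $1/k$ terms, and the $\sqrt\epsilon$ term sums to $O(k)\sqrt{\epsilon/\tilde\mu_d}$.
\item For $\tilde\mu_d=0$: $\tilde\mu_{t-1}=\lambda t^{3/2}$ gives $\sum t^2/t^{3/2}=O(k^{3/2})$ and $\sum t/t^{3/4}=O(k^{5/4})$, producing the rates $k^{-1/2}$ and $k^{-3/4}$.
\end{itemize}
The $t=0$ term, with $\tilde\mu_{-1}$, needs a convention (take $\tilde\mu_{-1}=\lambda_0$ with $\pi_0=\hat\pi_0$ exact).

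\textbf{Main obstacle.} The hardest step is the bookkeeping in Steps 1--2 when switching between $\pi_k$ and $\hat\pi_k$. The strong-convexity lower bound is naturally centered at the exact minimizer, while the performance-difference term needs $\hat\pi_k$. One has to pay exactly one Lipschitz gap ($\sqrt{\epsilon}$ term) plus function-value gaps ($\epsilon$ terms) without compounding the errors. My first attempt would keep all recursions on the exact $\pi_k$ and insert $\hat\pi_k$ only at the final advantage evaluation.
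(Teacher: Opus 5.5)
Your proposal is correct, but it is organized differently from the paper's proof.

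\textbf{The paper's route.} It works with the approximate iterates throughout:
it evaluates the recursion at $a=\hat\pi_{k+1}(s)$;
it uses Lemma~\ref{lem:1}, paying $\epsilon_{opt,k-1}$, to pass from $\tilde\Psi_{k-1}(s,\hat\pi_{k+1}(s))$ to $\tilde\Psi_{k-1}(s,\hat\pi_k(s))$;
it telescopes the values $\tilde\Psi_t(s,\hat\pi_{t+1}(s))$;
it applies Lemma~\ref{lem:1} once more at $a=\pi^*(s)$, paying another $\epsilon_{opt,k-1}$.
Only then does it move the advantage from $\hat\pi_{t+1}$ to $\hat\pi_t$ by Lipschitz continuity. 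It splits $\|\hat\pi_{t+1}(s)-\hat\pi_t(s)\|$ through $\pi_t(s)$: one piece is absorbed by the retained term $-\tilde\mu_{t-1}D(\pi_t(s),\hat\pi_{t+1}(s))$, and the other yields the $\sqrt{\epsilon}$ term.

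\textbf{Your route.} You run the classical dual-averaging potential argument on the exact minimum values $\tilde\Psi_{t-1}(q,\pi_t(q))$. You trade Lipschitz continuity against strong convexity between consecutive exact minimizers, and $\hat\pi_t$ enters only through a single Lipschitz swap $\pi_t\to\hat\pi_t$.

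\textbf{What each route buys.} In your derivation no function-value gaps arise at all. You get the first inequality with $\bar\epsilon_{opt}(k)$ replaced by its ``distance to exact policy'' part alone, which is sharper. The stated bound, including the $2\epsilon/k$ terms in the rates, then follows because the omitted terms are nonnegative. So the sentence in your Step~2 about where the $\epsilon_{opt}$ terms ``enter'' should be replaced by this observation. The paper's route, in exchange, produces a per-step inequality relating $\hat\pi_{k+1}$ and $\hat\pi_k$ directly. It reuses that inequality in the proof of Theorem~\ref{theorem:non-convex}, where the quantity of interest is $\psi^{\hat\pi_k}(s,\hat\pi_{k+1}(s))$ rather than a comparison with $\pi^*$.

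\textbf{The mismatch in Assumption~\ref{assumption:1}(3).} Keeping $\pi_t$ until the end also lets you resolve the mismatch you flagged. The paper's proof likewise applies that assumption at $\hat\pi_t$ without comment. Instead of swapping inside $\tilde\psi$, write $\tilde\psi(q,\pi_t(q);\theta_t)=\psi^{\hat\pi_t}(q,\pi_t(q))+\delta_t(q,\pi_t(q))$. Then use the $M_Q$-Lipschitz continuity of $\psi^{\hat\pi_t}(q,\cdot)$ together with $\psi^{\hat\pi_t}(q,\hat\pi_t(q))=0$. The errors are now evaluated at $\pi_t(q)$ and $\pi^*(q)$, exactly where the assumption controls them. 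Both points are $F_{t-1}$-measurable, so the stochastic parts vanish in conditional expectation. The only cost is $M_Q$ in place of $M_{\tilde Q}$ in the distance term.

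A small wording point: in Step~3 it is the evaluation point $\hat\pi_t(q)$, not the stochastic error itself, that is $F_{t-1}$-measurable.
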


We notice that as $k \to \infty$, the algorithm will achieve global convergence up to function approximation error on the order of $\mathcal{O}(\varsigma)$.

\subsubsection{Convergence when $\tilde{\mu}_d < 0$}
We next examine the case of $\tilde{\mu}_d < 0$, which means the cumulative advantage term $\sum_{t=0}^k \beta_t \tilde{\psi}(s,a;\theta_t)$ is non-convex in $a$ with bounded lower curvature. In this case, we will establish a different type of convergence based on the negative advantage function.

\begin{theorem}
\label{theorem:non-convex}
Under Assumptions~\ref{assumption:1} and~\ref{assumption:2}, if $\tilde{\mu}_d < 0$ and step sizes are chosen such that $\lambda_k = k(k+1)|\tilde{\mu}_d|$ and $\beta_t = t+1$, \(t=0,1,...,k-1\),  then for any \(s\in \mathbb{S}\), there exist iteration index \(\bar k(s)\) s.t.
\begin{equation}
\begin{aligned}
\label{equ:convergence_non_convex}
 -\frac{(M_Q + M_{\tilde{Q}})^2}{|\tilde{\mu}_d|(k+1)}
&\le
- \psi^{\hat{\pi}_{\bar{k}(s)}}(s, \hat{\pi}_{\bar{k}(s)+1}(s))\\
&\le
\frac{2\left[ V^{\pi_0}(s) - V^{\pi^*}(s) \right]}{k+1}  
+ \frac{3(M_Q + M_{\tilde{Q}})^2}{(1-\gamma)|\tilde{\mu}_d|(k+1)} 
+ \frac{4\epsilon(H_k + 1)}{(1-\gamma)(k+1)},
\end{aligned}
\end{equation}
where $H_k=\sum_{j=1}^k\frac{1}{j}$ is the k-th harmonic number, and \(\tilde{\mu}_t = \frac{t(t+1)}{2}\tilde{\mu}_d + k(k+1)|\tilde{\mu}_d|\), for $t=0,\ldots,k-1$. 
\end{theorem}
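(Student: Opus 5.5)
We will follow the nonconvex PDA analysis of \cite{juPolicyOptimizationGeneral2024}: show that the per-iteration decrease in $V$ is controlled by the (negative) advantage of the new policy, and then telescope. The main tool is the three-point inequality for the strongly convex subproblem. With $\lambda_k = k(k+1)|\tilde\mu_d|$ and $\sum_{t=0}^{k-1}\beta_t = k(k+1)/2$, the modulus $\tilde\mu_t$ stated in the theorem is at least $k(k+1)|\tilde\mu_d|/2>0$. Hence every $\tilde\Psi_t(s,\cdot)$ is strongly convex with modulus of order $k^2|\tilde\mu_d|$. Here the horizon $k$ is fixed, and the regularization weight is frozen at its final value for the whole run.

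\textbf{Step 1 (one-step descent).} Fix $s$ and an iteration $t$. Write $\tilde\Psi_t = \tilde\Psi_{t-1} + \beta_t\tilde\psi(\cdot;\theta_t)$, using that the $\lambda$ term is constant in $t$ for fixed $k$.

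Compare $\tilde\Psi_t$ at $\hat\pi_{t+1}(s)$ and at $\hat\pi_t(s)$. From Assumption~\ref{assumption:2} and strong convexity (optimality of $\pi_t$ for $\tilde\Psi_{t-1}$), we get
$\tilde\Psi_{t-1}(s,\hat\pi_{t+1}) \ge \tilde\Psi_{t-1}(s,\pi_t)+\tfrac{\tilde\mu_{t-1}}{2}\|\hat\pi_{t+1}-\pi_t\|^2$.
Combining this with $\tilde\Psi_t(s,\hat\pi_{t+1})\le \tilde\Psi_t(s,\hat\pi_t)+\epsilon$ and the near-optimality of $\hat\pi_t$ gives
\[
\beta_t\tilde\psi(s,\hat\pi_{t+1}(s);\theta_t) \le \beta_t\tilde\psi(s,\hat\pi_t(s);\theta_t) - \tfrac{\tilde\mu_{t-1}}{2}\|\hat\pi_{t+1}(s)-\pi_t(s)\|^2 + 2\epsilon .
\]

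Next we replace $\tilde\psi$ by $\psi^{\hat\pi_t}$. Both differences are absorbed by the Lipschitz constant $M_Q+M_{\tilde Q}$ of $\delta_t$. The relevant difference $\delta_t(s,\hat\pi_{t+1})-\delta_t(s,\hat\pi_t)$ is bounded via Young's inequality against the quadratic term, which yields $(M_Q+M_{\tilde Q})^2\beta_t/\tilde\mu_{t-1}$. The residual $\|\pi_t-\hat\pi_t\|\le\sqrt{2\epsilon/\tilde\mu_{t-1}}$ is handled the same way. Since $\psi^{\hat\pi_t}(s,\hat\pi_t(s))=0$, this gives a bound on $\psi^{\hat\pi_t}(s,\hat\pi_{t+1}(s))$ from above by $O\big((M_Q+M_{\tilde Q})^2\beta_t/\tilde\mu_{t-1} + \epsilon/\beta_t\big)$. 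This is exactly the left inequality, because $\beta_t/\tilde\mu_{t-1}\lesssim 1/(|\tilde\mu_d|(k+1))$. It also shows that the advantage of the updated policy is ``almost nonpositive'' at every state.

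\textbf{Step 2 (telescoping through the performance difference lemma).} Apply \eqref{equ:performence_difference} with $\pi=\hat\pi_t$ and $\pi'=\hat\pi_{t+1}$:
\[
V^{\hat\pi_{t+1}}(s)-V^{\hat\pi_t}(s) = \tfrac{1}{1-\gamma}\int\psi^{\hat\pi_t}(q,\hat\pi_{t+1}(q))\,\kappa_s^{\hat\pi_{t+1}}(dq).
\]
Since $\kappa_s^{\hat\pi_{t+1}}$ puts mass at least $1-\gamma$ on $q=s$, and the integrand is bounded above by the Step 1 error $e_t$ everywhere, we obtain
\[
\psi^{\hat\pi_t}(s,\hat\pi_{t+1}(s)) \ge V^{\hat\pi_{t+1}}(s)-V^{\hat\pi_t}(s) - \tfrac{1}{1-\gamma}e_t .
\]
Multiply by weights and sum over $t=0,\dots,k-1$; I would use uniform or $\beta_t$-type weights. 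The value differences then telescope to at most $V^{\pi_0}(s)-V^{\pi^*}(s)$. The errors sum to the $(M_Q+M_{\tilde Q})^2/((1-\gamma)|\tilde\mu_d|)$ term and to $\epsilon\sum 1/t\sim\epsilon H_k$, which accounts for the harmonic number.

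Dividing by the total weight shows that the average of $-\psi^{\hat\pi_t}(s,\hat\pi_{t+1}(s))$ is bounded by the right-hand side. Choosing $\bar k(s)$ as the index minimizing $-\psi$ then gives the upper bound, and Step 1 supplies the lower bound.

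\textbf{Main obstacle.} The delicate step is Step 1: getting the $\epsilon$ contributions to scale as $\epsilon/\beta_t$, so that they sum to $H_k$, rather than as $\epsilon$ per step. It also requires handling the mismatch between $\pi_t$ and $\hat\pi_t$ inside the strong-convexity argument, while keeping the constants $2$, $3$, and $4$ of the statement. I would first try to divide the comparison inequality by $\beta_t$ before applying Young's inequality, and to absorb both the $\hat\pi_t$ and $\hat\pi_{t+1}$ near-optimality errors there.
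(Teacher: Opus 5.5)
Your proof is correct. It shares the paper's skeleton:
\begin{itemize}
\item a three-point comparison of $\tilde\Psi_t$ at $\hat\pi_{t+1}(s)$ versus $\hat\pi_t(s)$;
\item cancellation via $\psi^{\hat\pi_t}(s,\hat\pi_t(s))=0$, so only the Lipschitz constant $L:=M_Q+M_{\tilde Q}$ of $\delta_t$ enters and no $\varsigma$ appears;
\item localization of the performance difference lemma through $\kappa_s^{\hat\pi_{t+1}}(\{s\})\ge 1-\gamma$.
\end{itemize}
It differs from the paper in two places.

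\textbf{Actor error.} The paper keeps both Bregman terms $\tilde\mu_tD(\pi_{t+1}(s),\hat\pi_t(s))$ and $\tilde\mu_{t-1}D(\pi_t(s),\hat\pi_{t+1}(s))$. It converts them into a quadratic in $\|\hat\pi_{t+1}(s)-\hat\pi_t(s)\|$ minus $(\epsilon_{opt,t}+\epsilon_{opt,t-1})/\beta_t$. You keep only the second term and absorb the mismatch $\|\pi_t(s)-\hat\pi_t(s)\|\le\sqrt{2\epsilon/\tilde\mu_{t-1}}$ through $L\sqrt{2\epsilon/\tilde\mu_{t-1}}\le\frac{\beta_tL^2}{2\tilde\mu_{t-1}}+\frac{\epsilon}{\beta_t}$. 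In both versions the $\epsilon/\beta_t$ scaling comes simply from dividing $\beta_t\psi\le\cdots+2\epsilon$ by $\beta_t$. So the step you flag as the ``main obstacle'' is routine.

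\textbf{Aggregation.} The paper sums with weights $\beta_t$. Then $\sum_t\beta_t(V^{\hat\pi_t}(s)-V^{\hat\pi_{t+1}}(s))=\sum_{t<k}V^{\hat\pi_t}(s)-kV^{\hat\pi_k}(s)$ does not telescope. The paper therefore has to bound each $V^{\hat\pi_t}(s)$ by approximate monotone improvement, $V^{\hat\pi_{t+1}}(s)\le V^{\hat\pi_t}(s)+\frac{C_t+E_t}{1-\gamma}$, which produces a double sum. Your sentence that the value differences telescope to at most $V^{\pi_0}(s)-V^{\pi^*}(s)$ is a literal telescoping only for uniform weights, so commit to those. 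With uniform weights, $\tilde\mu_{t-1}\ge k(k+1)|\tilde\mu_d|/2$ and $\sum_t 1/\beta_t=H_k$ give
\[
\min_{0\le t<k}\bigl(-\psi^{\hat\pi_t}(s,\hat\pi_{t+1}(s))\bigr)\le\frac{V^{\pi_0}(s)-V^{\pi^*}(s)}{k}+\frac{L^2}{(1-\gamma)|\tilde\mu_d|k}+\frac{3\epsilon H_k}{(1-\gamma)k}.
\]
This lies below the stated right-hand side for every $k\ge1$. By contrast, dividing the paper's weighted inequality by $\sum_t\beta_t=k(k+1)/2$ reproduces the constants $2$ and $3$ but gives $8$, not $4$, in front of $\epsilon(H_k+1)$. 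In exchange, the paper's route keeps a quadratic in $\|\hat\pi_{t+1}(s)-\hat\pi_t(s)\|$, which it reuses for its policy-stability bound. It also gets the constant $1$ on the $L^2$ part of the lower bound without trading it against the $\epsilon$ part.

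\textbf{Lower bound.} Your Step~1 does not give the left inequality ``exactly'' when $\epsilon>0$. Like the paper's own bound $\psi^{\hat\pi_k}(s,\hat\pi_{k+1}(s))\le C_k+E_k$, it carries an extra $O(\epsilon/(\bar k(s)+1))$ term. The stated lower bound is therefore exact only for $\epsilon=0$, and this slack is shared with the paper rather than specific to your argument.
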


As $k \to \infty$, the error term involving the harmonic number $H_k$ vanishes because the $k$ in the denominator dominates the growth of the numerator. The meaning of this convergence is discussed in~\cite{juPolicyOptimizationGeneral2024}, which may imply global convergence is some special cases. In general, the consequences of this convergence result remain open and is an important future direction to pursue.

\subsection{Implementation}
To maintain numerical stability during training and avoid performance degradation when dealing with large values of the cumulative summation term in the PDA objective in Algorithm ~\ref{alg:pda}, we implement a scaled version of the objective function. The scaling factors are chosen to ensure that the weighting remains consistent with the original theoretical framework. In addition, we use a recursive scheme to store the weighted sum of advantage. Specifically, the accumulated advantage is updated as
\begin{equation}
\begin{aligned}
\tilde{\psi}^{\sum}_k(s,a;\Theta_k) \approx \left(1-\frac{\beta_k}{\sum_{i=0}^k\beta_i}\right)\tilde{\psi}^{\sum}_{k-1}(s,a;\Theta_{k-1}) 
+ \frac{\beta_k}{\sum_{i=0}^k\beta_i} \tilde{\psi}(s,a; \theta_k).
\end{aligned}
\end{equation}
With this representation, the action can be obtained by solving the following optimization problem:
\begin{equation}
\begin{aligned}
&\pi_{k+1}(s) = \arg\min_{a \in \mathcal{A}}{\tilde\Psi'}(s,a) 
=\arg\min_{a \in \mathcal{A}} \left[ \tilde{\psi}^{\sum}_k(s,a;\Theta_k) + \frac{\lambda_k}{\sum_{i=0}^k\beta_i}  D(\pi_0(s),a) \right],
\end{aligned}
\end{equation}
where \(\tilde\Psi'\) represents the scaled objective with parameterized accumulated advantage. 

We employ the parameter schedule with  \(\beta_k = k+1, \ \lambda_k = \lambda (k+1)^{3/2}.\) This implementation follows the $\mu_d = 0$ case, which allows for adaptive updates as the training progresses without the need to estimate the weak convexity parameter $\mu_d$. This leaves $\lambda$ as a primary hyperparameter that requires tuning. For problems with continuous action space, we choose the Euclidean norm as the distance generating function, hence the corresponding Bregman divergence term for PDA is  $\frac{1}{2}\|a - \pi_0(s)\|_2^2$. The prox-center $\pi_0(s)$ can be a random initial policy, a pretrained policy, or simply set to zero. The approximate solution of the optimization sub-problem can then be obtained with standard optimizers, with gradients obtained through backpropagation of action $a$ on ${\tilde\Psi'}$.  

\begin{algorithm}[tb]
\caption{Actor-Accelerated PDA Training Loop}
\label{alg:train}
\begin{algorithmic}
\STATE \textbf{Input:} Networks $V_{\theta_V}$, $\psi^\Sigma_{\theta_\psi}$, $\pi_{\theta_\pi}$; Parameters $\beta = \Sigma_\beta = 1, \lambda$
\STATE \textbf{Initialize:} $\pi_0$ (prior policy or zero)

\FOR{$k = 1$ to $K$}
    \STATE Collect trajectories $\tau = \{(s_t, a_t, r_t, s'_t)\}_{t=1}^{T}$

    \STATE \textbf{Compute returns and normalized advantage}
    \STATE $\{G, \tilde{A}\} \gets \text{ProcessBatch}(\tau)$ 
    
    \STATE \textbf{Update Value Function}
    \STATE $\theta_V \gets \text{argmin}_{\theta_V} \mathbb{E} \left[ \|V_{\theta_V}(s) - G\|_2^2 \right]$
    
    \STATE \textbf{Update Sum Advantage Network}
    \STATE $\psi^\Sigma_{\text{old}} \gets \psi^\Sigma_{\theta_\psi}(s, a)$
    \STATE \(
    \theta_\psi \gets \text{argmin}_{\theta_\psi} \mathbb{E} \left[ \| \psi^\Sigma_{\theta_\psi}(s, a) - (\frac{\Sigma_\beta-\beta}{\Sigma_\beta}\psi^\Sigma_{\text{old}} + \frac{\beta}{\Sigma_\beta} \tilde{A}) \|_2^2 \right]
    \)
    
    \STATE \textbf{Update Actor}
    \STATE $a \gets \pi_{\theta_\pi}(s)$
    \STATE $\theta_\pi \gets \text{argmin}_{\theta_\pi} \mathbb{E} \left[\psi^\Sigma_{\theta_\psi}(s, a) + \frac{\beta^{1.5} \lambda}{\Sigma_\beta} \|a - \pi_0(s)\|_2^2 \right]$
    
    \STATE \textbf{Update Coefficients}
    \STATE $\beta \gets \beta + 1$
    \STATE $\Sigma_\beta \gets \Sigma_\beta + \beta$
\ENDFOR
\end{algorithmic}
\end{algorithm}

For on-policy algorithms such as PPO, Gaussian policy are typically used with learnable parameters to control the variance of the sampling noise. The degree of exploration is subsequently controlled by entropy coefficients. In PDA,  we apply heuristic noise by employing a Gaussian actor with a time-dependent standard deviation of $\sigma(t)={\sigma_0}/{\beta^{0.3}}$, where $\sigma_0$ is an additional hyperparameter that controls the exploration magnitude. The pseudo-code of the actor-accelerated PDA implementation is shown in Algorithm~\ref{alg:train}. 

\section{Experiment}

\subsection{Evaluation of Optimum Tracking}

\begin{figure}[ht]
  \begin{center}
    \centerline{\includegraphics[width=0.8\columnwidth]{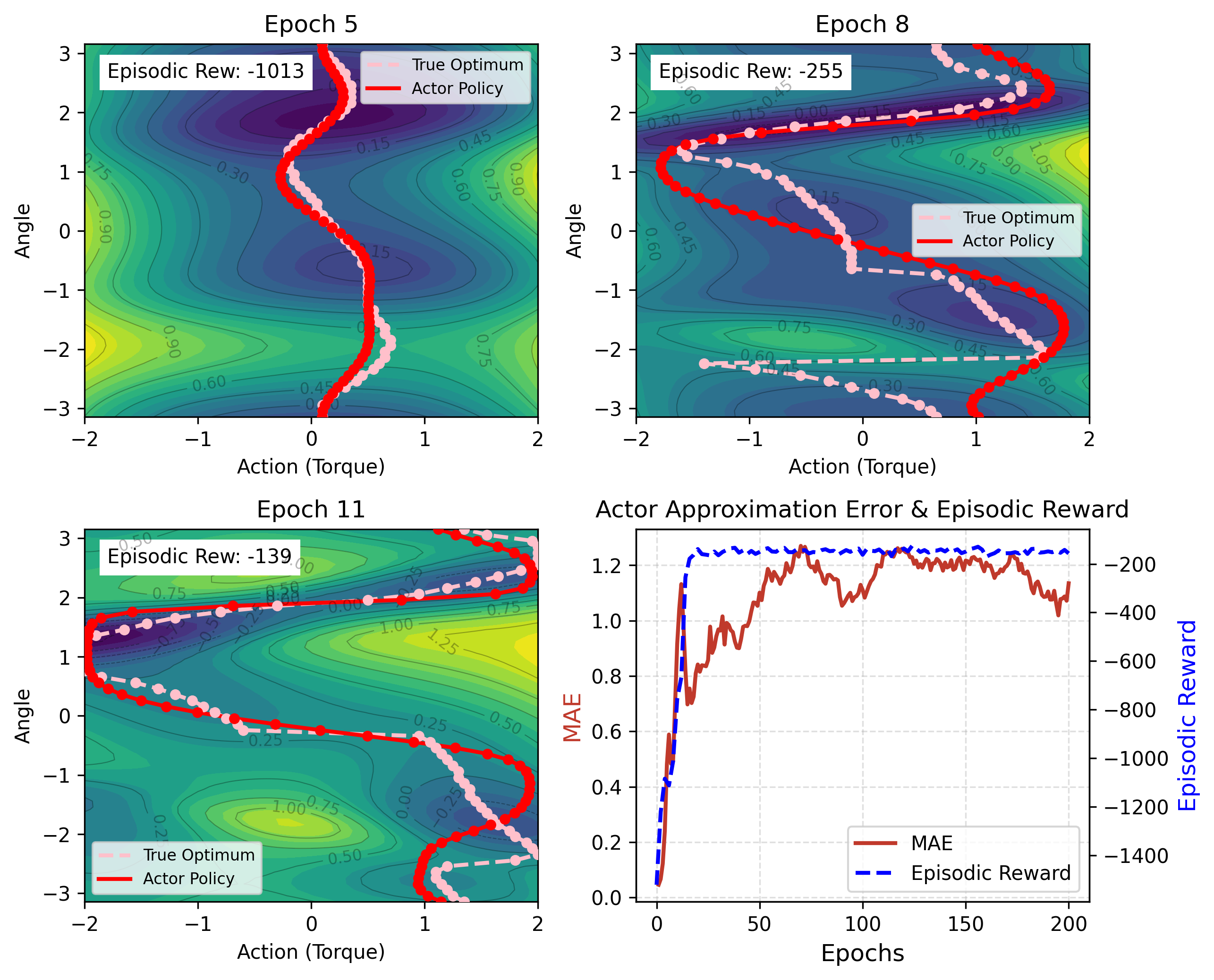}}
    \caption{Visualization of optimum tracking with the actor in the Pendulum-v1 environment. The landscape evolution of the scaled optimization sub-problem $\tilde\Psi'$ over epochs 5, 8, and 11, at a fixed $\dot{\theta}=0.2$ is shown in the first three plots. The pink dotted line represents the true optimum of $\tilde\Psi'$, and the solid red line represents the output of the actor network. The final plot tracks the mean absolute error (MAE) between the true optimum and the actor output averaged across a range of states and actions at each epoch for an extended training process.}
    \label{fig:converge}
  \end{center}
\end{figure}

We use the inverted pendulum swing-up (Pendulum-v1) environment from Gymnasium~\cite{towersGymnasiumStandardInterface2025} as an example to visualize the behavior of the actor. In this problem, the state is defined as $s = [\cos(\theta), \sin(\theta), \dot{\theta}]$, where $\theta$ and  $\dot{\theta}$  represent the pendulum angle and angular velocity, respectively. The goal of this problem is to apply a torque $\tau$ to maintain the pendulum at its upright position. By fixing the angular velocity $\dot{\theta}$, we can visualize the landscape of $\tilde\Psi'$ over the $(\theta, \tau)$ domain, which allows for observing how the actor learns to track the solution of the optimization sub-problem. As shown in Figure~\ref{fig:converge}, the first three plots track the evolution of the PDA objective ($\Psi'$)  and the learning trajectory of the actor, confirming that the actor successfully tracks the optimum and solves the problem. The final plot demonstrates that the approximation error of the actor stabilizes during the extended training process, providing empirical support for the optimality gap condition postulated in Assumption~\ref{assumption:2}. 

\subsection{Continuous Control Benchmark}
\begin{figure*}[t]

  \vskip 0.2in
  \begin{center}
    \centerline{\includegraphics[width=\textwidth]{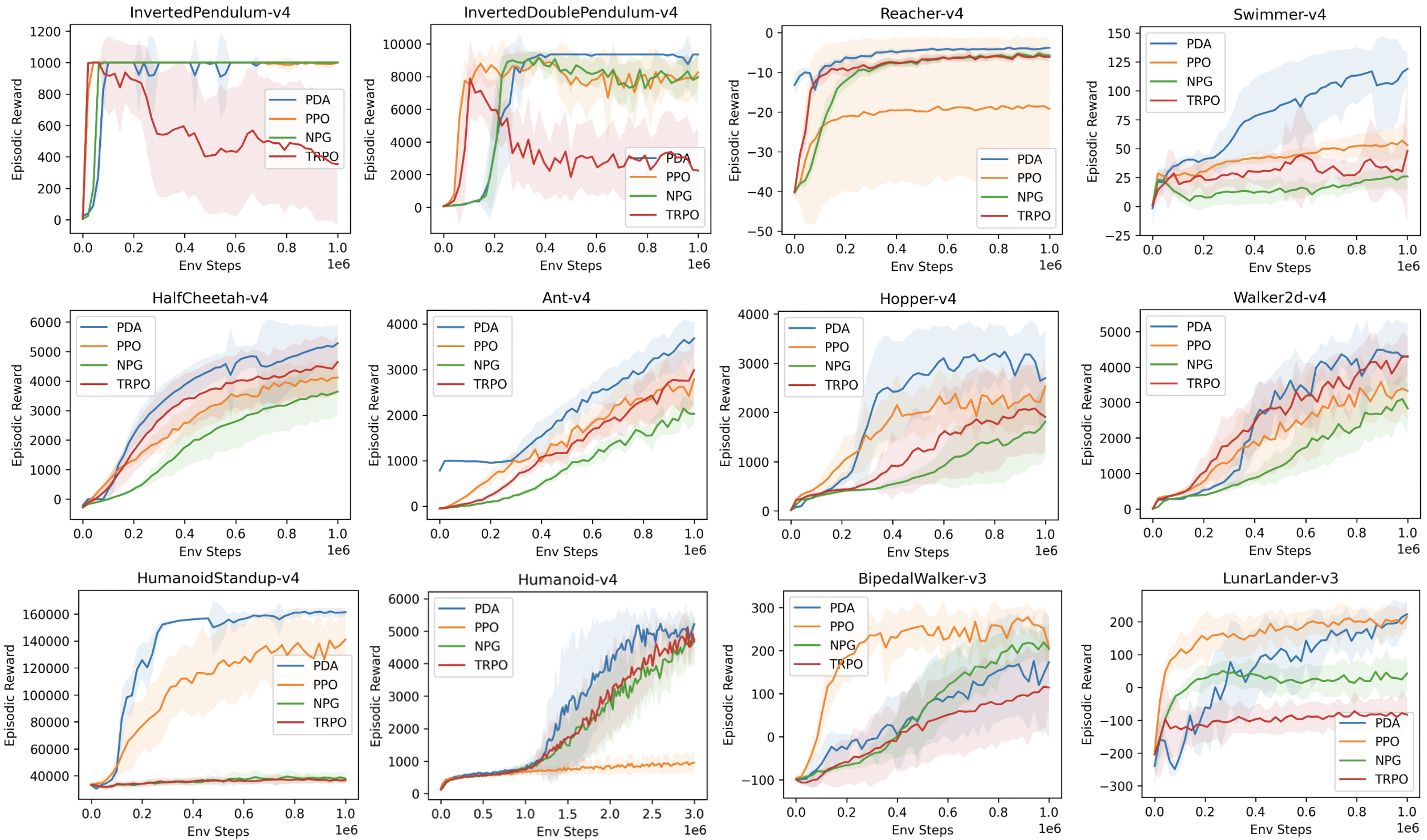}}
    \caption{
Performance comparison of PDA, PPO, TRPO, and NPG across MuJoCo and Box2D environments. The curves and the shaded areas represent the mean and standard deviation across 100 test evaluations (10 seeds per environment and 10 tests per seed), respectively.
}
    \label{fig:mujoco}
  \end{center}
\end{figure*}

We implemented PDA within the Tianshou~\cite{wengTianshouHighlyModularized2022} framework, an open-source RL library built on PyTorch~\cite{paszkePyTorchImperativeStyle2019} and Gymnasium~\cite{towersGymnasiumStandardInterface2025}. This enables a robust implementation and fair comparisons with other RL algorithms using its well-tuned hyperparameters. Tianshou's PPO implementation achieves substantially better performance in episodic return on MuJoCo~\cite{todorovMuJoCoPhysicsEngine2012} benchmarks compared to the original implementation. For this reason, we adopt Tianshou’s PPO as the primary baseline for evaluating our PDA algorithm.

We evaluate PDA on classic continuous control environments, including MuJoCo and Box2d,  and compare its performance against widely used on-policy algorithms such as PPO, TRPO, and Natural Policy Gradient (NPG)~\cite{kakadeNaturalPolicyGradient2001}.  PDA hyperparameters are tuned on Hopper and Walker2d and then kept fixed for all remaining environments. Hyperparameters for PPO, TRPO, and NPG are from Tianshou and also kept constant throughout.  Note that Tianshou’s official benchmarks report the maximum testing episodic return over 1 million environment steps, while we use the average testing episodic return of the final five epochs, a more conservative and commonly used metric.

PDA consistently outperforms the performance of PPO and other on-policy baselines in most tasks, as shown in Figure~\ref{fig:mujoco} and Table~\ref{tab:benchmark}. In particular, PDA achieves substantially better performance in high-dimensional locomotion tasks, including HalfCheetah, Ant, Walker2d, Hopper, and the Humanoid variants. Notably, for the challenging Humanoid variants, PDA achieves significantly better performance than PPO within 1–3 million timesteps using default parameters.

\subsection{Operations Research Benchmark}
\begin{figure}[ht]
  \begin{center}
    \centerline{\includegraphics[width=0.8\columnwidth]{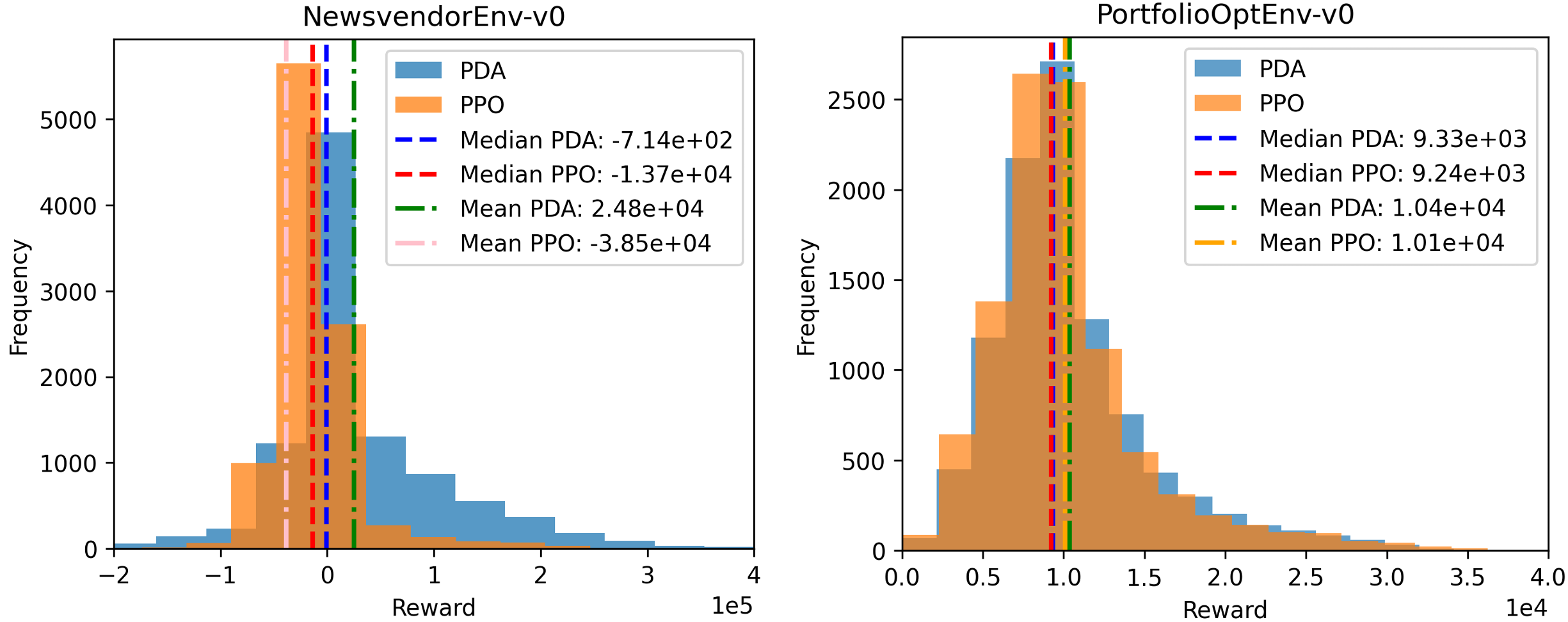}}
    \caption{
OR-Gym Benchmark for PDA and PPO. Episodic reward distributions of agents after training for 3 million (Newsvendor) and 1 million (PortfolioOpt) environment steps, aggregated over 10 random seeds with $10^3$ random trials per seed. A lower threshold is applied for the Newsvendor environment to maintain  readability of the plot due to the existence of extreme negative values for PPO. The training curves are shown in Appendix~\ref{sec:app:or}.}
    \label{fig:or}
  \end{center}
\end{figure}

Beyond continuous control tasks, RL shows strong potential for addressing decision-making problems in operations research (OR). To assess PDA's performance, we evaluate PDA across four OR environments from the OR-Gym benchmark suite~\cite{hubbsORGymReinforcementLearning2020, balajiORLReinforcementLearning2019, glassermanSensitivityAnalysisBaseStock1995, dantzigMultistageStochasticLinear1993}. Our evaluation includes comparisons with on-policy algorithms such as PPO, as well as traditional OR methods. We first consider two classic tasks in stochastic optimization: Newsvendor, a multi-period newsvendor problem with lead times, and PortfolioOpt, a multi-period asset allocation problem (Figure~\ref{fig:or}), where we examine the reward distribution of the trained agents. In the Newsvendor environment, PDA achieves significantly better performance than PPO in both mean and median values, where a mean larger than the median indicates the reward distribution is positively skewed. In the PortfolioOpt environment, PDA achieves slightly better overall performance, with its return distribution shifted positively relative to PPO.
\begin{table}[h]
\centering
\caption{Performance comparison of PDA, PPO, and classical OR methods on the InvManagement environments with Backlog and Lost Sales (LS) settings. We report the mean episodic return, standard deviation (Std), and performance ratio relative to the Oracle benchmark (lower is better).  Results for the OR methods and the Oracle are taken from the original study~\cite{hubbsORGymReinforcementLearning2020}. } 
\label{tab:or}
\small 
\begin{tabular}{lcccccc}
\toprule
        Backlog & PDA & PPO & SHLP & DFO & MIP & Oracle \\
        \midrule
        Mean& 491.6& 496.4& 508.0 & 360.9 & 388.0 & 546.8 \\
        Std& 11.8& 6.8& 28.1 & 39.9 & 30.8 & 30.3 \\
        Ratio & 1.1& 1.1& 1.1 & 1.5 & 1.4 & 1.0 \\
        \midrule
        LS& PDA & PPO  & SHLP & DFO & MIP & Oracle \\
        \midrule
        Mean & 472.3& 447.4& 485.4 & 364.3 & 378.5 & 542.7 \\
        Std & 10.3& 16.8& 29.1 & 33.8 & 26.1 & 29.9 \\
        Ratio & 1.1& 1.2& 1.1 & 1.5 & 1.4 & 1.0 \\
        \bottomrule
\end{tabular}
\end{table}

For InvManagement environments (multi-echelon supply chain inventory management problems with backlog or lost sales), we compared the RL policies against classical OR methods, including Shrinking Horizon Linear Programming (SHLP), Derivative-Free Optimization (DFO), and Mixed-Integer Programming (MIP). These methods are further compared to an Oracle model that computes a theoretical upper bound using knowledge of the actual demand.  As shown in Table~\ref{tab:benchmark}, the PDA policy 
achieves comparable performance to PPO. In addition, PDA achieves returns close to SHLP in both Backlog and Lost Sales settings, with much lower standard deviations than the aforeentioned classic OR methods. 

\section{Analysis}
\subsection{Sensitivity Study for $\sigma_0$ and $\lambda$ }

\begin{figure*}[t]
  \begin{center}
    \centerline{\includegraphics[width=0.9\textwidth]{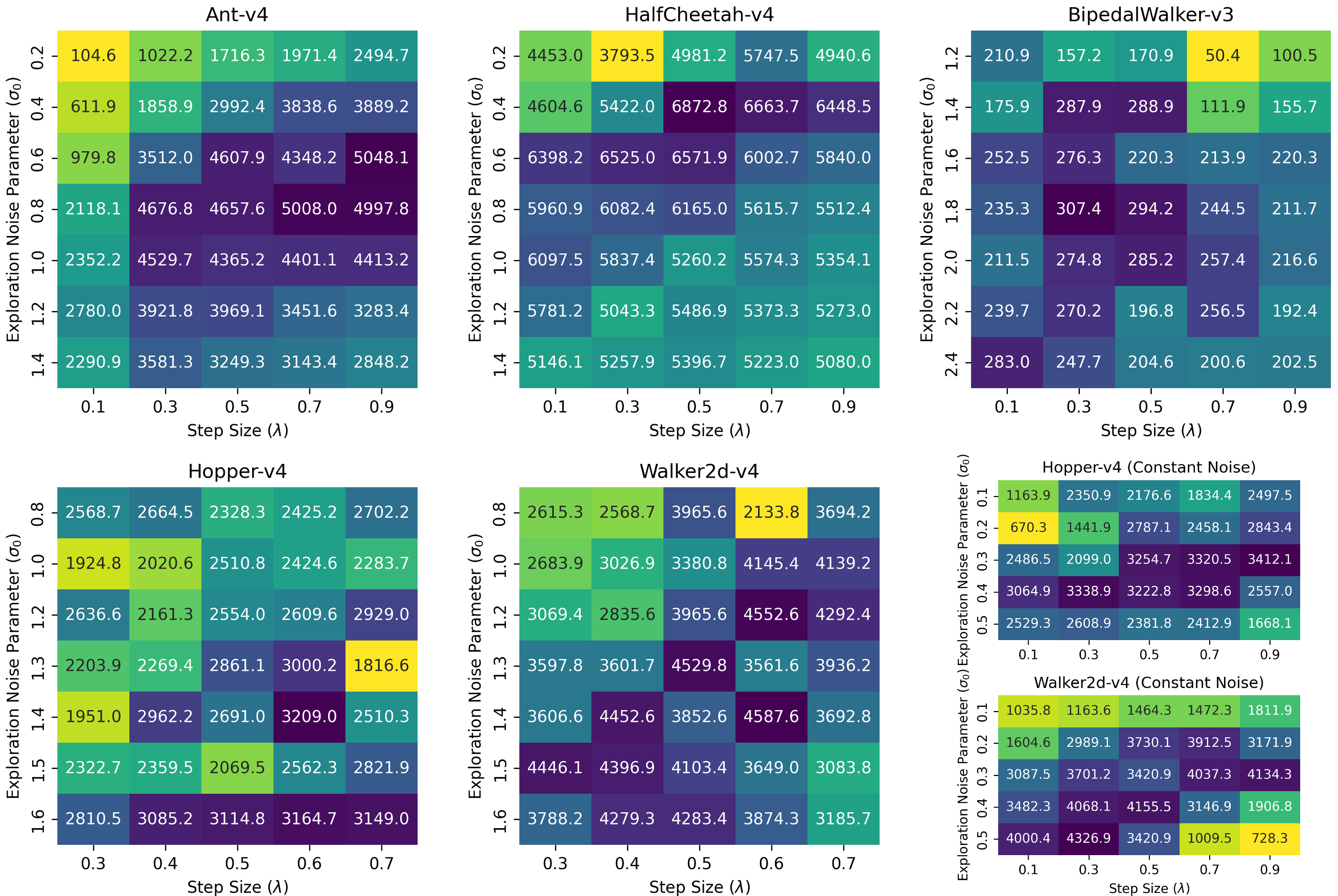}}
    \caption{
      Sensitivity on exploration noise parameter $\sigma_0$ and step size $\lambda$. The heatmap shows the testing episodic reward averaged for the last 5 epochs. The final plot shows the sensitivity study with constant noise (3 seeds per environment and 10 tests per seed), while the rest of the plots show the default decreasing noise (5 seeds per environment and 10 tests per seed)}.
    \label{fig:noise_stepsize}
  \end{center}
\end{figure*}

RL performance is often highly sensitive to hyperparameter choices~\cite{hendersonDeepReinforcementLearning2017}, hence fully realizing an algorithm’s potential requires careful tuning. Throughout the benchmark, we apply a fixed set of hyperparameters across all environments, with PDA-specific hyperparameters, step size set to $\lambda=0.5$ and the exploration noise standard deviation parameter set to $\sigma_0=1.3$. To better understand the robustness of PDA and the effect of tuning, we conduct a sensitivity study over $\lambda$ and $\sigma_0$ across five continuous-control environments: Hopper, Walker2d, Ant, HalfCheetah, and BipedalWalker (Figure~\ref{fig:noise_stepsize}). Sweeping through a range of parameters shows the sensitivity of PDA with respect to hyperparameter tuning and helps to explain the rationale for parameter selection. 

In dynamic balancing tasks (Hopper, Walker2d, BipedalWalker), higher exploration noise is generally favored, which facilitates the discovery of stabilizing behaviors in the state-action space. In contrast, quadrupedal locomotion tasks (Ant and HalfCheetah) are more stable due to multiple contact points to the floor, which require lower exploration noise and larger step sizes to achieve better performance. In this case, increased step sizes provide stronger regularization and improve training efficiency. Interestingly, the initial performance of the Ant-v4 training curve (Figure~\ref{fig:mujoco}) is due to the healthy reward gained from stable behaviors by PDA regularization. The two PDA-specific hyperparameters ($\lambda$ and $\sigma_0$) have intuitive interpretations about the algorithm's training dynamics. Hence, physical intuitions about task dynamics and stability can guide the identification of effective parameter regimes for PDA. Importantly, PDA's strong performance does not depend on a single set of fine-tuned configurations. There exists a broad region of hyperparameters that yields competitive results. 

In addition, we explored the option of using constant noise $\sigma(t) = \sigma_0$ illustrated as the last plot in Figure~\ref{fig:noise_stepsize}. The results indicate that constant noise can also be advantageous, particularly for environments that require stronger and sustained exploration like Hopper and Walker2d. However, it is challenging to identify a single $\sigma_0$ that performs well across all environments at the same time. A decreasing noise schedule traverses a wider range of noise levels, hence making it more adaptive and robust across different tasks.

\subsection{Choice for Sum Advantage Update}

\begin{figure}[ht]
  \begin{center}
    \centerline{\includegraphics[width=0.8\columnwidth]{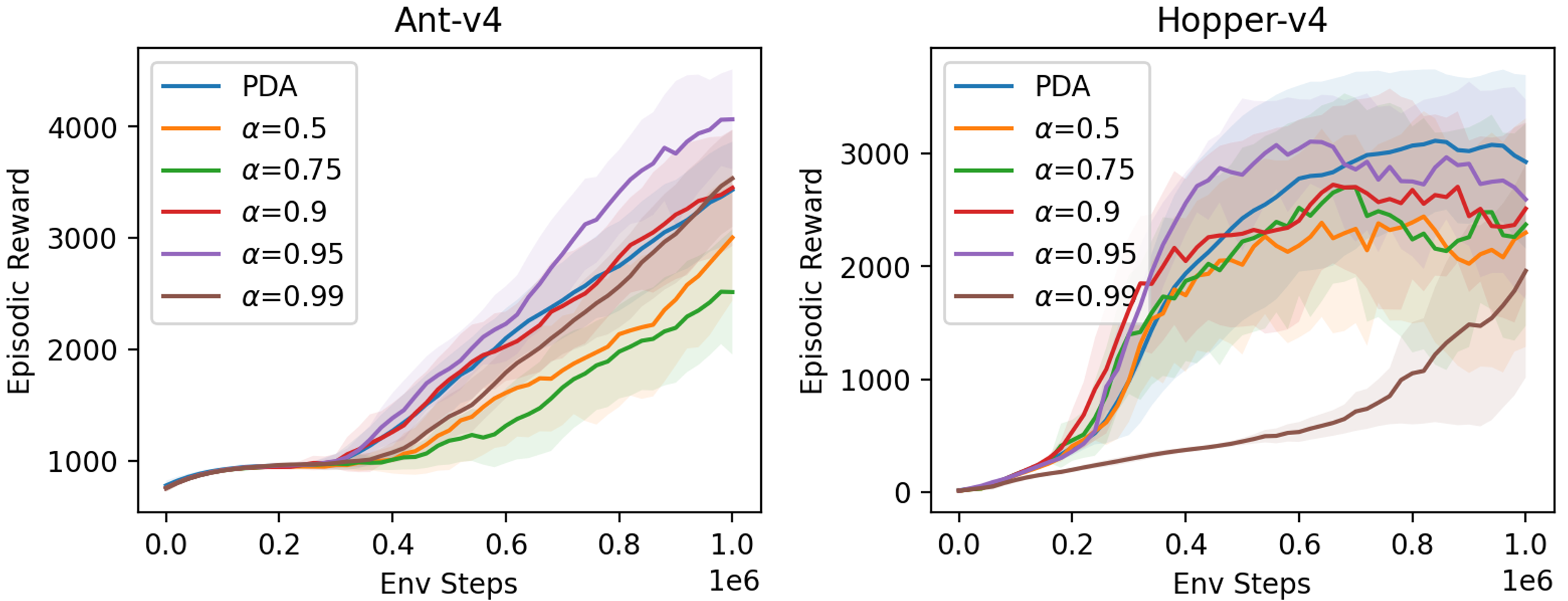}}
    \caption{Performance comparison between the PDA with scaled sum advantage update (labeled as PDA) and an exponential smoothing scheme (labeled with smoothing parameter $\alpha$) across two environments. Results are averaged over 5 random seeds. The curve and shaded region represent the mean and standard deviation of the test results.}
    \label{fig:sumadv}
  \end{center}
\end{figure}
  
The scaled sum advantage update $\psi^\Sigma \leftarrow \left(1-\frac{\beta}{\Sigma_\beta}\right)\psi^\Sigma_{\text{old}} + \frac{\beta}{\Sigma_\beta}\tilde{A}$ can be viewed as a weighted averaging of the advantage estimates. The weighting scheme is theoretically important in PDA as it is involved in the telescoping operation that cancels accumulated terms across iterations during the convergence proof. From a practical perspective, the update is similar to the exponential smoothing scheme commonly used in RL implementations. This motivates a study in which we replace the theoretically derived weighting with a standard exponential smoothing weighting: $\psi^\Sigma \leftarrow (1-\alpha)\psi^\Sigma_{\text{old}} + \alpha \tilde{A}$ where $\alpha \in (0,1)$ is a constant smoothing coefficient.
    
We evaluate this substitution in two environments over different values of $\alpha$. From Figure~\ref{fig:sumadv}, for certain choices of $\alpha$, exponential smoothing can improve empirical performance compared to the original PDA by a more aggressive or stable averaging in the sum advantage update. However, this modification introduces an additional hyperparameter that must be tuned and is likely to vary across environments. More importantly, this breaks the connection to the dual averaging theory underlying PDA. These findings highlight a trade-off between theory and empirical performance in the design of sum advantage update schemes.

\subsection{Choice of Optimizer}

\begin{figure}[ht]
  \begin{center}
    \centerline{\includegraphics[width=0.8\columnwidth]{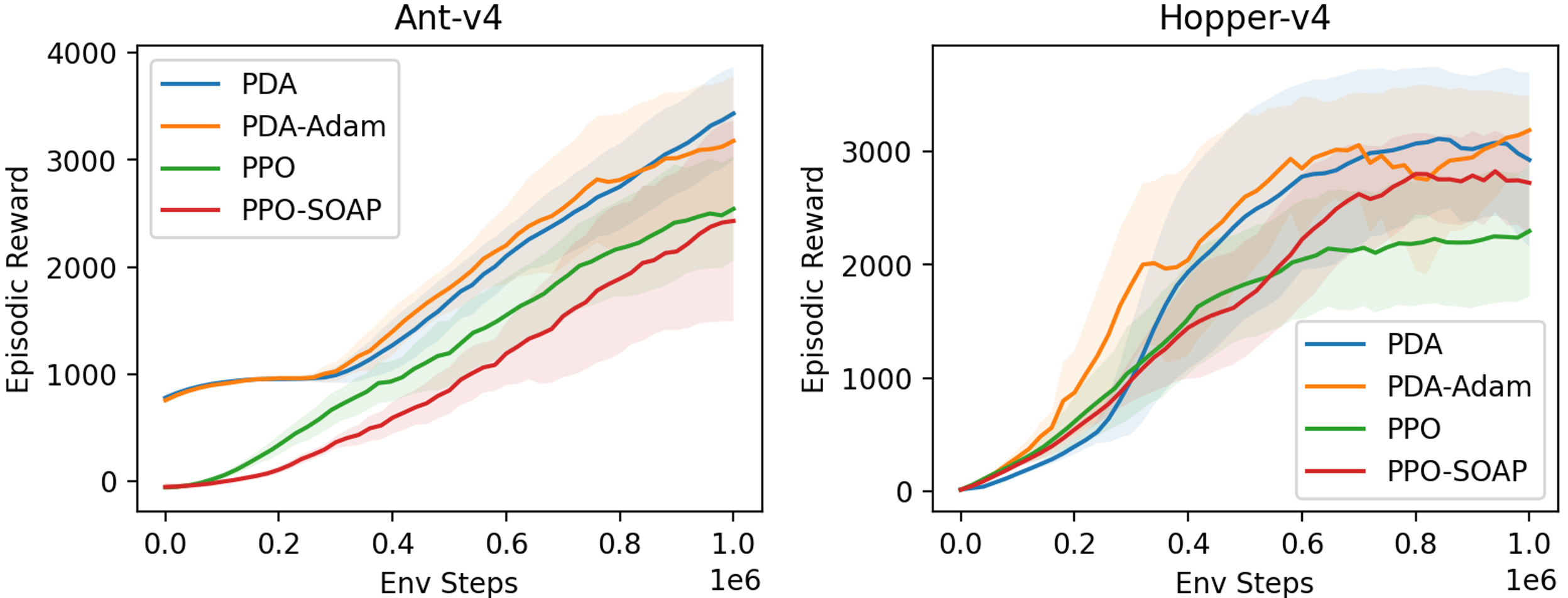}}
    \caption{Comparison of optimizer effects on PDA and PPO in the Ant and Hopper environments. across two environments. Results are averaged over 5 random seeds. The curve and shaded region represent the mean and standard deviation of the test results.
    }
    \label{fig:optimizer}
  \end{center}
\end{figure}
We utilize the SOAP~\cite{vyasSOAPImprovingStabilizing2025} optimizer for neural network training in PDA. SOAP employs Kronecker-factored preconditioning, which calculates and maintains preconditioner matrices that approximate second-order curvature information. As a result, it benefits from a larger batch size and fewer iterations for training . This makes the training wall-clock speed approximately twice as fast as Adam's. We choose this optimizer for its strong empirical performance for training neural networks and that they are easily accessibe in RL training enviroments, even the theoretical underpinnings are generally lacking.

Figure~\ref{fig:optimizer} compares the performance of PDA (using SOAP by default) with PDA-Adam, as well as PPO (using with Adam~\cite{kingmaAdamMethodStochastic2017}) by default) with PPO-SOAP, for the Ant and Hopper environments. In both benchmarks, PDA and PDA-Adam show similar learning curves, and both variants can achieve higher episodic rewards than PPO. Although SOAP substantially accelerates training, it does not provide a significant improvement in sample efficiency or final performance for PDA relative to Adam. Moreover, substituting Adam with SOAP in PPO does not lead to consistent performance gains. These results indicate that while SOAP is an effective tool for accelerating computation, it is not a primary contributor to PDA’s performance advantages.

\section{Related Work}

\textbf{Dual Averaging.} Dual Averaging (DA) was originally introduced by Nesterov~\cite{nesterovPrimaldualSubgradientMethods2009}. In gradient descent or mirror descent methods, only the local gradient is used at the current iteration. More recent gradients have decreasing weighting throughout the update, although they are not less important. DA determines the next update by maintaining a running average of the past gradients, so that all previous gradients have equal weighting. This accumulation of history gradients allows DA to achieve strong convergence results. These ideas have been extended to online learning and RL~\cite{xiaoDualAveragingMethods2010, neuUnifiedViewEntropyregularized2017,juPolicyOptimizationGeneral2024}. DA style policy updates correspond to regularized policy improvement using cumulative advantages or gradients. Existing theoretical analyses demonstrate favorable convergence properties under assumptions of exact solution of the optimization sub-problem~\cite{juPolicyOptimizationGeneral2024}. 

\textbf{Policy mirror descent.} PMD has recently emerged as a unifying framework for policy optimization for many widely used RL algorithms. PMD provides a way to connect a wide range of algorithms, including policy gradient and trust-region methods, under a common theoretical basis. Existing work establishes convergence guarantees for PMD under suitable regularity conditions, usually assuming exact or sufficiently accurate policy evaluation~\cite{lanPolicyMirrorDescent2023, zhanPolicyMirrorDescent2023, alfanoNovelFrameworkPolicy2023}. PMD is closely related to PDA but differs in its regularization structure. While PDA regularizes with respect to the distance between the new policy and initial policy (a fixed prox-center), PMD regularizes with respect to the distance between the new policy and current policy.

\textbf{Regularized and Trust-Region Policy Optimization.} Regularization is central to many practical policy optimization algorithms.TRPO~\cite{schulmanTrustRegionPolicy2015} formulates policy updates as a constrained or penalized optimization problem to guarantee monotonic improvement.  PPO~\cite{schulmanProximalPolicyOptimization2017} applies clipping to a surrogate objective as an approximation. These methods rely on a linear approximation of the cost function in the PMD framework, where the approximation error is bounded by the KL divergence between the old and new policies. To ensure monotonic improvement, the update step can be formulated as minimizing the objective subject to a penalty on the maximum state-wise KL divergence.  TRPO relaxes the impractical maximum KL constraint to an average KL constraint. PPO introduces a ratio clipping mechanism to bound the likelihood ratio $r(s,a) = \pi(a|s)/\pi_{k}(a|s)$, which helps mitigate the impact of heavy-tailed noise in stochastic advantage estimates. Furthermore, the PMD framework generalizes these formulations by allowing for arbitrary Bregman divergences and strongly convex regularizers

\section{Conclusion}
This work proposes actor-accelerated PDA, which enables the practicality of PDA for RL problems in continuous state and action spaces. We introduce a parameterized policy network to directly learn the solution of the optimization sub-problem during the policy evaluation step. This approximation significantly accelerates the algorithm for use across various continuous RL problems. The actor-accelerated PDA shows competitive performance in continuous control and operations research tasks compared with widely used on-policy RL algorithms.

\newpage

\bibliographystyle{unsrtnat}
\bibliography{2_ref}

\newpage
\appendix
\onecolumn

\section{Proof of Theorems}
\subsection{Lemmas}

\begin{lemma}
  \label{lem:1}
  For any feasible policy \(\hat{\pi}_{k+1}(s)\) and \({\pi}_{k+1}(s)\):
\begin{equation*}
    \tilde{\Psi}_k(s, \hat{\pi}_{k+1}(s)) - \epsilon_{opt, k}(s) + \tilde{\mu}_k D(\pi_{k+1}(s), a) \le \tilde{\Psi}_k(s, a)
\end{equation*}
\end{lemma}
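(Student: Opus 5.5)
The inequality follows from two facts: $\pi_{k+1}(s)$ is the exact minimizer of a function that is strongly convex relative to $D$, and Assumption~\ref{assumption:2} relates $\hat\pi_{k+1}(s)$ to $\pi_{k+1}(s)$. Fix $s$ and $k$, and write $\tilde\Psi_k(s,a) = g(a) + \lambda_k D(\hat\pi_0(s),a)$, where $g(a) := \sum_{t=0}^k \beta_t \tilde\psi(s,a;\theta_t)$.

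First I show that $\tilde\Psi_k(s,\cdot)$ is $\tilde\mu_k$-strongly convex relative to the distance generating function $\omega$. Specifically, I claim that $\tilde\Psi_k(s,a) - \tilde\mu_k\,\omega(a)$ is convex in $a$. The function $D(\hat\pi_0(s),a)$ equals $\omega(a)$ plus an affine function of $a$. By the weak convexity part of Assumption~\ref{assumption:1}, each $\tilde\psi(s,\cdot;\theta_t) - \tfrac{\tilde\mu_d}{2}\|a\|^2$ is convex, because $\tilde\mu_d = -\mu_{\tilde Q}$. Converting the $\|\cdot\|^2/2$ modulus into the $\omega$-modulus uses the $1$-strong convexity of $\omega$. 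This is exactly the ``generalized strong convexity'' the paper invokes from~\cite{lanDeterministicConvexOptimization2020} when it states that $\tilde\Psi_k$ has modulus $\tilde\mu_k = \tilde\mu_d\sum_t\beta_t + \lambda_k$. I will take that statement as given, with the standing requirement $\tilde\mu_k\ge 0$.

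Second, I apply the standard three-point (prox) lemma at the exact minimizer $\pi_{k+1}(s) = \arg\min_{a\in\mathbb U}\tilde\Psi_k(s,a)$. Because $\tilde\Psi_k(s,\cdot) - \tilde\mu_k\omega$ is convex, there is a subgradient $h$ of $\tilde\Psi_k(s,\cdot)$ at $\pi_{k+1}(s)$ such that, for all $a$,
\[
\tilde\Psi_k(s,a) \ge \tilde\Psi_k(s,\pi_{k+1}(s)) + \langle h, a-\pi_{k+1}(s)\rangle + \tilde\mu_k D(\pi_{k+1}(s),a).
\]
The first-order optimality condition over the closed convex set $\mathbb U$ lets me choose $h$ with $\langle h, a-\pi_{k+1}(s)\rangle\ge 0$ for every $a\in\mathbb U$. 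Together these give
\[
\tilde\Psi_k(s,\pi_{k+1}(s)) + \tilde\mu_k D(\pi_{k+1}(s),a) \le \tilde\Psi_k(s,a).
\]

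Third, Assumption~\ref{assumption:2} gives $\tilde\Psi_k(s,\hat\pi_{k+1}(s)) - \epsilon_{opt,k}(s) \le \tilde\Psi_k(s,\pi_{k+1}(s))$. Substituting this into the left-hand side of the previous display yields the claim.

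The main obstacle is the optimality condition when the objective is nonconvex ($\tilde\mu_d<0$). For the subdifferential calculus to be valid, I rewrite $\tilde\Psi_k = (\tilde\Psi_k - \tilde\mu_k\omega) + \tilde\mu_k\omega$, a convex function plus a differentiable convex one. I then apply the optimality condition to this convex decomposition, using the sum rule with the indicator function of $\mathbb U$ (assuming a constraint qualification such as a nonempty relative interior). This step requires $\tilde\mu_k \ge 0$, which is exactly the stated condition on $\lambda_k$.
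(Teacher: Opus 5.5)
Your proposal is correct relative to the paper and follows the same route as its proof: generalized strong convexity of $\tilde\Psi_k(s,\cdot)$ with modulus $\tilde\mu_k$, the three-point inequality $\tilde\Psi_k(s,\pi_{k+1}(s))+\tilde\mu_k D(\pi_{k+1}(s),a)\le\tilde\Psi_k(s,a)$ at the exact minimizer, and then substitution of Assumption~\ref{assumption:2}, with the optimality-condition step spelled out where the paper only cites it. One caveat, which you share with the paper because both of you take the modulus $\tilde\mu_k$ as given: deriving it from the $1$-strong convexity of $\omega$ works only for $\tilde\mu_d\le 0$, since for $\tilde\mu_d>0$ the term $-\tilde\mu_d B\bigl(\omega-\tfrac12\|\cdot\|^2\bigr)$ (with $B=\sum_{t=0}^k\beta_t$) that turns the convex $g-\tfrac{\tilde\mu_d B}{2}\|\cdot\|^2$ into $g-\tilde\mu_d B\,\omega$ is concave, and the lemma can then fail (for instance $\mathbb U=[-1,1]$, $\tilde\psi(s,a;\theta)=\tfrac{\tilde\mu_d}{2}a^2$, $\hat\pi_0(s)=0$, $\omega(a)=\tfrac12a^2+\tfrac1{12}a^4$ and $\hat\pi_{k+1}=\pi_{k+1}$ violate it for every $a\neq 0$), so in that regime the argument needs $\omega$ to be $\tfrac12\|\cdot\|_2^2$ up to an affine term, as in the paper's implementation.
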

\begin{proof}
From the strong convexity of $\tilde{\Psi}_k$ and the generalized optimality condition of $\pi_{k+1}$ with Bregman divergence, we have: \(\tilde{\Psi}_k(s, \pi_{k+1}(s)) + \tilde{\mu}_k D(\pi_{k+1}(s), a) \le \tilde{\Psi}_k(s, a).\) Substituting  $\tilde{\Psi}_k(s, \pi_{k+1}(s))$ with Assumption~\ref{assumption:2} gives the result.
\end{proof}

\subsection{Proof of Theorem~\ref{theorem:convex}}

\begin{proof}
We first write the recursive form of $\tilde{\Psi}_k$ (which follows directly from its definition) for $k \geq 0$:
\begin{equation*}
    \tilde{\Psi}_k(s, a) = \tilde{\Psi}_{k-1}(s, a) + \beta_k \tilde{\psi}(s, a;\theta_k) + (\lambda_k - \lambda_{k-1}) D(\pi_0(s), a),
\end{equation*}
where we define dummy variables $\tilde{\Psi}_{-1}(s,a) = \lambda_{-1} = 0$ for all $s,a$.
Setting $a=\hat{\pi}_{k+1}(s)$ above and taking note that $\lambda_k - \lambda_{k-1} \ge 0$ and the Bregman divergence is nonnegative, we arrive at:

\begin{equation*}
    \beta_k \tilde{\psi}(s, \hat{\pi}_{k+1}(s); \theta_k) \le \tilde{\Psi}_k(s, \hat{\pi}_{k+1}(s)) - \tilde{\Psi}_{k-1}(s, \hat{\pi}_{k+1}(s)).
\end{equation*}
Using Lemma~\ref{lem:1} with index \(k-1\) evaluated \(a=\hat{\pi}_{k+1}(s)\), we have:
\begin{equation*}
    -\tilde{\Psi}_{k-1}(s, \hat{\pi}_{k+1}(s)) \le -\tilde{\Psi}_{k-1}(s, \hat{\pi}_k(s)) + \epsilon_{opt, k-1}(s)- \tilde{\mu}_{k-1} D(\pi_{k}(s), \hat{\pi}_{k+1}(s)).
\end{equation*}
Combine the two inqualities above, we have:
\begin{equation*}
\beta_k \tilde{\psi}(s, \hat{\pi}_{k+1}(s); \theta_k) \le \tilde{\Psi}_k(s, \hat{\pi}_{k+1}(s)) - \tilde{\Psi}_{k-1}(s, \hat{\pi}_k(s)) + \epsilon_{opt, k-1}(s) - \tilde{\mu}_{k-1} D(\pi_{k}(s), \hat{\pi}_{k+1}(s))
\end{equation*}

Applying this inequality recursively yields a telescopic sum.  With the dummy variable $\tilde{\Psi}_{-1}(s,\hat\pi_0) = 0$, we have:
\begin{equation*}
    \sum_{t=0}^{k-1} \beta_t \tilde{\psi}(s, \hat{\pi}_{t+1}(s); \theta_t) \le \tilde{\Psi}_{k-1}(s, \hat{\pi}_{k}(s)) + \sum_{t=0}^{k-1} \epsilon_{opt, t}(s) - \sum_{t=0}^{k-1} \tilde{\mu}_{t-1} D(\pi_t(s), \hat{\pi}_{t+1}(s)).
\end{equation*}

We now apply Lemma~\ref{lem:1} with index $k-1$ and an arbitrary action $a$  to the inequality derived above to upper bound the \(\tilde{\Psi}_{k-1}(s, \hat{\pi}_{k}(s))\)  term:

\begin{equation*}
\begin{aligned}
\sum_{t=0}^{k-1} \beta_t \tilde{\psi}(s, \hat{\pi}_{t+1}(s); \theta_t) 
\le &
\tilde{\Psi}_{k-1}(s, a) 
+ \epsilon_{opt, k-1}(s) 
- \tilde{\mu}_{k-1} D(\pi_k(s), a) \\
&+\sum_{t=0}^{k-1} \epsilon_{opt, t}(s)
- \sum_{t=0}^{k-1} \tilde{\mu}_{t-1} D({\pi}_t(s), \hat{\pi}_{t+1}(s)).
\end{aligned}
\end{equation*}
Expanding $\tilde{\Psi}_{k-1}(s, a)$ with definition and substituting, we have:
\begin{equation*}
\begin{aligned}
    \sum_{t=0}^{k-1} \beta_t [\tilde{\psi}(s, \hat{\pi}_{t+1}(s); \theta_t) - \tilde{\psi}(s, a; \theta_t)] 
    \le &
    \lambda_{k-1} D(\pi_0(s), a) - \tilde{\mu}_{k-1} D(\pi_k(s), a) \\
    &+ \sum_{t=0}^{k-1} \epsilon_{opt, t}(s) + \epsilon_{opt, k-1}(s) - \sum_{t=0}^{k-1} \tilde{\mu}_{t-1} D(\pi_t(s), \hat{\pi}_{t+1}(s)).
\end{aligned}
\end{equation*}
For the first term on the lefthand side, use Lipschitz continuity and function approximation error from Assumption~\ref{assumption:1}:
\begin{equation*}
    \tilde{\psi}(s, \hat{\pi}_{t+1}(s);\theta_t) \ge - M_{\tilde{Q}} \|\hat{\pi}_{t+1}(s) - \hat{\pi}_t(s)\| + \delta_t(s, \hat{\pi}_t(s)).
\end{equation*}
For the \(\tilde{\psi}(s, a; \theta_t)\) term on the lefthand side, apply function approximation error again and let $a=\pi^*(s)$:
\begin{equation*}
    -\tilde{\psi}(s, \pi^*(s); \theta_t) = -\psi^{\hat{\pi}_t}(s, \pi^*(s)) - \delta_t(s, \pi^*(s)).
\end{equation*}
Combining these, we have: 
\begin{equation*}
\begin{aligned}
    \sum_{t=0}^{k-1} \beta_t [-\psi^{\hat{\pi}_t}(s, \pi^*(s))] 
    \le &
    \lambda_{k-1} D(\pi_0(s), \pi^*(s)) - \tilde{\mu}_{k-1} D(\pi_k(s), \pi^*(s)) \\
    &+ \sum_{t=0}^{k-1} \epsilon_{opt, t}(s) + \epsilon_{opt, k-1}(s) - \sum_{t=0}^{k-1} \tilde{\mu}_{t-1} D(\pi_t(s), \hat{\pi}_{t+1}(s))\\
    &+ \sum_{t=0}^{k-1} \beta_t [\delta_t(s, \pi^*(s)) - \delta_t(s, \hat\pi_t(s))]  
    + \sum_{t=0}^{k-1} \beta_t M_{\tilde{Q}} \|\hat{\pi}_{t+1}(s) - \hat{\pi}_t(s)\|.
\end{aligned}
\end{equation*}

To simplify the last three summations above, we use the triangle inequality \( \|\hat{\pi}_{t+1}(s) - \hat{\pi}_t(s)\| \le \|\hat{\pi}_{t+1}(s) - \pi_t(s)\| + \|\pi_t(s) - \hat{\pi}_t(s)\|\) in the following derivation:  
\begin{equation*}
\begin{aligned}
\sum_{t=0}^{k-1} & \left( \beta_t M_{\tilde{Q}} \|\hat{\pi}_{t+1}(s) - \hat{\pi}_t(s)\| - \tilde{\mu}_{t-1} D(\pi_t(s), \hat{\pi}_{t+1}(s)) \right) + \sum_{t=0}^{k-1} \beta_t [\delta_t(s, \pi^*(s)) - \delta_t(s, \hat\pi_t(s))]   \\
&\le \sum_{t=0}^{k-1} \left( \beta_t M_{\tilde{Q}} \|\hat{\pi}_{t+1}(s) - \pi_t(s)\| - \tilde{\mu}_{t-1} D(\pi_t(s), \hat{\pi}_{t+1}(s)) \right) + \sum_{t=0}^{k-1} \beta_t M_{\tilde{Q}} \|\pi_t(s) - \hat{\pi}_t(s)\|
+ \varsigma \sum_{t=0}^{k-1} \beta_t 
\\
&\le\sum_{t=0}^{k-1} \frac{M_{\tilde{Q}}^2 \beta_t^2}{2 \tilde{\mu}_{t-1}} + \sum_{t=0}^{k-1} \beta_t M_{\tilde{Q}} \sqrt{\frac{2\epsilon_{opt, t-1}(s)}{\tilde{\mu}_{t-1}}} + \varsigma \sum_{t=0}^{k-1} \beta_t,
\end{aligned}
\end{equation*}
where the first inequality uses Assumption~\ref{assumption:1}. Meanwhile, the first summation in the second inequality uses strong convexity of Bregman Divergence $D(x, y) \ge \frac{1}{2}\|x-y\|^2$ and the inequality $bx - \frac{a}{2}x^2 \le \frac{b^2}{2a}$, and the second summation follows from Assumption~\ref{assumption:2}. 

Now fix any $q \in \mathbb{S}$. Combining the last two intermeditate results, taking expectations first w.r.t. the noise $\xi$ and then the state $s \sim \kappa_q^{\pi^*}$, applying ~\eqref{equ:performence_difference} with $\pi' = \pi^*$, and dividing through by $\bar{\beta}_k := \sum_{t=0}^{k-1} \beta_t$, we have 

\begin{equation*}
\begin{aligned}
(1-\gamma) \frac{1}{\bar{\beta}_k} \sum_{t=0}^{k-1} \beta_t \mathbb{E}[V^{\hat{\pi}_t}(q) - V^{\pi^*}(q)] 
+ \frac{\tilde{\mu}_{k-1}}{\bar{\beta}_k} \mathbb{E}[\mathcal{D}_q(\pi_k, \pi^*)] 
\\
\le 
\frac{\lambda_{k-1}}{\bar{\beta}_k} \mathcal{D}_q(\pi_0, \pi^*) 
+ \frac{M_{\tilde{Q}}^2}{2\bar{\beta}_k} \sum_{t=0}^{k-1} \frac{\beta_t^2}{\tilde{\mu}_{t-1}} 
+ \varsigma 
+ \frac{\bar{\epsilon}_{opt}(k)}{\bar{\beta}_k},
\end{aligned}
\end{equation*}

where the cumulative optimization error term is defined as: 
\begin{equation*}
\begin{aligned}
\bar{\epsilon}_{opt}(k) = \mathbb{E}_{s \sim \kappa^{\pi^*}_q}\bigg[
\underbrace{\mathbb{E}[\epsilon_{opt, k-1}(s)] 
+ \sum_{t=0}^{k-1} \mathbb{E}[\epsilon_{opt, t}(s)]}_{\text{Optimality Gap}} 
+ \underbrace{M_{\tilde{Q}} \sum_{t=0}^{k-1} \mathbb{E}\left[\sqrt{\frac{2\beta_t^2\epsilon_{opt, t-1}(s)}{\tilde{\mu}_{t-1}}}\right]}_{\text{Distance to Exact Policy}} \bigg]
\end{aligned}
\end{equation*}

When $\tilde{\mu}_d>0$, if we choose $\ {\beta}_k=k+1$, $\lambda_k=\tilde{\mu}_d$, and use the  approximation error bound $0\le \epsilon_{opt, k}(s)\le\epsilon$, then $\tilde{\mu}_{t-1}  = \tilde{\mu}_d \left( 1 + \frac{t(t+1)}{2} \right)$, and 

\begin{equation*}
\begin{aligned}
\frac{2(1-\gamma)}{k(k+1)} \sum_{t=0}^{k-1} \{(t+1)\mathbb{E}[V^{\hat \pi_t}(q) - V^{\pi^*}(q)]\} + \tilde{\mu}_d \mathbb{E}[\mathcal{D}_q(\pi_k, \pi^*)] \\
\le \frac{2\tilde{\mu}_d \mathcal{D}_q(\pi_0, \pi^*)}{k(k+1)} + \frac{4M_{\tilde{Q}}^2}{\tilde{\mu}_d(k+1)} + \varsigma + \left( \frac{2\epsilon}{k} + \frac{4 M_{\tilde{Q}} \sqrt{2\epsilon}}{\sqrt{\tilde{\mu}_d}(k+1)}\right).
\end{aligned}
\end{equation*}

The bound for follows after simplifying terms w.r.t.~$k$.

When $\tilde{\mu}_d=0$, if we choose $\ {\beta}_k=k+1$, $\lambda_k=\lambda(k+1)^{3/2}$, and use the approximation error bound  $0\le \epsilon_{opt, k}(s)\le\epsilon$, and the integral bound ($\sum_{x=1}^k \sqrt{x} \le \int_0^{k+1} \sqrt{x} dx$), then 
\begin{equation*}
\begin{aligned}
\frac{2(1-\gamma)}{k(k+1)} \sum_{t=0}^{k-1} \{(t+1)\mathbb{E}[V^{\hat{\pi}_t}(q) - V^{\pi^*}(q)]\} \\
\le \frac{2\lambda \mathcal{D}_q(\pi_0, \pi^*)\sqrt{k}}{k+1} + \frac{4M_{\tilde{Q}}^2 \sqrt{k+1}}{\lambda k} + \varsigma \\
+ \left( \frac{2\epsilon}{k} + 
\frac{4 M_{\tilde{Q}} \sqrt{2\epsilon}}{\sqrt{\lambda}} \frac{(k+1)^{1/4}}{k}\right), 
\end{aligned}
\end{equation*}
and we get the result after simplifying terms w.r.t.~$k$.
\end{proof}

\subsection{Proof of Theorem~\ref{theorem:non-convex}}

\begin{proof}
Similar to the proof  for Theorem~\ref{theorem:convex}, we apply Lemma~\ref{lem:1} for $\tilde{\Psi}_{k-1}(s, \hat{\pi}_{k+1}(s))$:
\begin{equation*}
    \beta_k \tilde{\psi}(s, \hat{\pi}_{k+1}(s); \theta_k) \le \tilde{\Psi}_k(s, \hat{\pi}_{k+1}(s)) - \tilde{\Psi}_{k-1}(s, \hat{\pi}_k(s)) + \epsilon_{\text{opt}, k-1}(s) - \tilde{\mu}_{k-1} D(\pi_{k}(s), \hat{\pi}_{k+1}(s)).
\end{equation*}
Apply Lemma~\ref{lem:1} again for $\tilde{\Psi}_k(s, \hat{\pi}_{k+1}(s))$ at step $k$ evaluated at $a=\hat{\pi}_{k}(s)$:
\begin{equation*}
    \tilde{\Psi}_k(s, \hat{\pi}_{k+1}(s)) + \tilde{\mu}_k D(\pi_{k+1}(s), \hat{\pi}_{k}(s)) \le \tilde{\Psi}_k(s, \hat{\pi}_{k}(s)) + \epsilon_{\text{opt}, k}(s).
\end{equation*}
Substituting  $\tilde{\Psi}_k(s, \hat{\pi}_{k+1}(s))$ and using the recursive definition $\tilde{\Psi}_k(s, \hat{\pi}_{k}(s)) = \tilde{\Psi}_{k-1}(s, \hat{\pi}_{k}(s)) + \beta_k \tilde{\psi}(s, \hat{\pi}_{k}(s);\theta_k) + (\lambda_k - \lambda_{k-1}) D(\pi_0(s), \hat{\pi}_{k}(s))$, we obtain:
\begin{equation*}
\begin{aligned}
    \beta_k \tilde{\psi}(s, \hat{\pi}_{k+1}(s); \theta_k) \le &
    - \tilde{\mu}_k D(\pi_{k+1}(s), \hat{\pi}_{k}(s)) - \tilde{\mu}_{k-1} D(\pi_{k}(s), \hat{\pi}_{k+1}(s)) \\
    &+ \beta_k \tilde{\psi}(s, \hat{\pi}_{k}(s);\theta_k) 
    + (\lambda_k - \lambda_{k-1}) D(\pi_0(s), \hat{\pi}_{k}(s)) \\
    &+ \epsilon_{\text{opt}, k}(s) + \epsilon_{\text{opt}, k-1}(s).
\end{aligned}
\end{equation*}
Using the function approximation error from Assumption~\ref{assumption:1} and noting that $\psi^{\hat\pi_k}(s, \hat\pi_k(s))=0$, we have $\tilde{\psi}(s, \hat\pi_k(s); \theta_k) = \delta_k(s, \hat\pi_k(s))$. Rearranging terms yields:
\begin{equation*}
\begin{aligned}
    \psi^{\hat\pi_k}(s, \hat\pi_{k+1}(s)) 
    &- \frac{\lambda_k - \lambda_{k-1}}{\beta_k}D(\pi_0(s), \hat{\pi}_{k}(s))
    - \frac{1}{\beta_k}[\epsilon_{\text{opt}, k}(s) + \epsilon_{\text{opt}, k-1}(s)] \\
    &- [\delta_k(s, \hat\pi_k(s)) - \delta_k(s, \hat\pi_{k+1}(s))] \\
    &\le 
    - \frac{\tilde{\mu}_k}{\beta_k} D(\pi_{k+1}(s), \hat{\pi}_{k}(s)) 
    - \frac{\tilde{\mu}_{k-1}}{\beta_k} D(\pi_{k}(s), \hat{\pi}_{k+1}(s)) 
    \le 0.
\end{aligned}
\end{equation*}
Using the Lipschitz property of $\delta_k$, we bound the terms involving \(\delta_k\) by $\delta_k(s, \hat\pi_k(s)) - \delta_k(s, \hat\pi_{k+1}(s)) \le (M_Q + M_{\tilde{Q}}) \|\hat{\pi}_{k+1} - \hat{\pi}_k\|$.
From the 1-strong convexity of Bregman divergence, we have:
\begin{equation*}
\begin{aligned}
-\frac{\tilde{\mu}_k}{\beta_k} &D(\pi_{k+1}(s), \hat{\pi}_{k}(s)) -\frac{\tilde{\mu}_{k-1}}{\beta_k} D(\pi_{k}(s), \hat{\pi}_{k+1}(s)) 
\\
&\le  -\frac{1}{2\beta_k} 
\Big( \tilde{\mu}_k \|\pi_{k+1}(s)-\hat{\pi}_{k}(s)\|^2 + \tilde{\mu}_{k-1} \|\pi_{k}(s)-\hat{\pi}_{k+1}(s)\|^2 \Big)\\
&\le 
    \frac{1}{2\beta_k} \Big( \tilde{\mu}_k\|\pi_{k+1}(s)-\hat{\pi}_{k+1}(s)\|^2 + \tilde{\mu}_{k-1}\|\pi_k(s)-\hat{\pi}_k(s)\|^2 \Big) 
- \frac{1}{4\beta_k} (\tilde{\mu}_k+\tilde{\mu}_{k-1}) \|\hat{\pi}_{k+1}(s)-\hat{\pi}_k(s)\|^2.
\end{aligned}
\end{equation*}
The second inequality comes from expanding the policy difference terms using triangle inequality and Young's inequality to derive $\|a-b\|^2 \geq -\|a\|^2 + \frac{1}{2}\|b\|^2$.

Combining with the Lipschitz bound and using Young's inequality $ax - bx^2 \le a^2 / (4b)$ on the terms involving $\|\hat{\pi}_{k+1}-\hat{\pi}_k\|$:
\begin{equation*}
    (M_Q + M_{\tilde{Q}}) \|\hat{\pi}_{k+1}(s) - \hat{\pi}_k(s)\| - \frac{1}{4\beta_k} (\tilde{\mu}_k+\tilde{\mu}_{k-1}) \|\hat{\pi}_{k+1}(s)-\hat{\pi}_k(s)\|^2 
    \le \frac{\beta_k(M_Q + M_{\tilde{Q}})^2}{\tilde{\mu}_k+\tilde{\mu}_{k-1}}.
\end{equation*}
From the 1-strong convexity of Bregman divergence and the Assumption~\ref{assumption:2}, we have:
\begin{equation*}
    \tilde{\mu}_k \|\pi_{k+1}(s)-\hat{\pi}(s)_{k+1}\|^2 \le 2\epsilon_{\text{opt}, k}(s).
\end{equation*}
Let $\bar{D}_{A}:=\max _{a_{1}, a_{2} \in A} D\left(a_{1}, a_{2}\right)$. Combine the relationships to result in the bound for the advantage:
\begin{equation}
\label{equ:adv_bound}
    0 \le
    - \psi^{\hat\pi_k}(s, \hat\pi_{k+1}(s)) 
    + \frac{\lambda_k - \lambda_{k-1}}{\beta_k}\bar D_A
    + \frac{2}{\beta_k}[\epsilon_{\text{opt}, k}(s) + \epsilon_{\text{opt}, k-1}(s)] 
    + \frac{\beta_k(M_Q + M_{\tilde{Q}})^2}{\tilde{\mu}_k+\tilde{\mu}_{k-1}}.
\end{equation}

Define constants $C_k\ge0$ and $E_k\ge0$ as:
\begin{equation*}
    C_k := \frac{\lambda_k - \lambda_{k-1}}{\beta_k}\bar{D}_{\mathcal{A}} + \frac{\beta_k(M_Q + M_{\tilde{Q}})^2}{\tilde{\mu}_k+\tilde{\mu}_{k-1}}
    \ge \frac{\beta_k(M_Q + M_{\tilde{Q}})^2}{\tilde{\mu}_k+\tilde{\mu}_{k-1}},
\end{equation*}
\begin{equation*}
    E_k := \frac{4}{\beta_k}\epsilon \ge\frac{2}{\beta_k}[\epsilon_{\text{opt}, k}(s) + \epsilon_{\text{opt}, k-1}(s)].
\end{equation*}
Using the performance difference lemma (Equation~\ref{equ:performence_difference}), since $\kappa_{s}^{\hat\pi_{k+1}}$ is a probability measure, we can add and subtract a constant term in and outside the integral:
\begin{equation}
\begin{aligned}
\label{equ:performance_diff_1}
V^{\hat\pi_{k+1}}(s)-V^{\hat\pi_{k}}(s) 
&\le
\frac{1}{1-\gamma}\int \left[
\psi^{\hat\pi_{k}}(q,\hat\pi_{k+1}(q))
- C_k - E_k
\right]\kappa_{s}^{\hat\pi_{k+1}}(dq) 
+  \frac{1}{1-\gamma} (C_k + E_k)
\\
&\le \frac{1}{1-\gamma} \left[
\psi^{\hat\pi_{k}}(s,\hat\pi_{k+1}(s))
- C_k - E_k
\right]\kappa_{s}^{\hat\pi_{k+1}}(\{s\}) 
+  \frac{1}{1-\gamma} (C_k + E_k)
\\
&\le \psi^{\hat\pi_{k}}(s,\hat\pi_{k+1}(s)) + \frac{\gamma}{1-\gamma} (C_k + E_k),
\end{aligned}
\end{equation}
where the second inequality follows the fact $\kappa_s^{\pi_{k+1}}(\{s\}) \ge 1-\gamma$.

We define $S_k := \sum_{t=0}^{k-1} \beta_t [V^{\hat{\pi}_t}(s) - V^{\hat{\pi}_{t+1}}(s)]$. With $\beta_t = t+1$, 

\begin{equation*}
\begin{aligned}
S_k &= \sum_{t=0}^{k-1} (t+1) [V^{\hat{\pi}_t}(s) - V^{\hat{\pi}_{t+1}}(s)] 
= \sum_{t=0}^{k-1} V^{\hat{\pi}_t}(s) - k V^{\hat{\pi}_k}(s)
\le \sum_{t=0}^{k-1} V^{\hat{\pi}_t}(s) - k V^{\pi^*}(s).
\end{aligned}
\end{equation*}

From Equation~\ref{equ:adv_bound} and \ref{equ:performance_diff_1}, we have:
\begin{equation*}
V^{\hat{\pi}_{t+1}}(s) - V^{\hat{\pi}_t}(s) \le \frac{1}{1-\gamma}(C_t + E_t).
\end{equation*}
Substituting the telescopic sum, we have:
\begin{equation*}
V^{\hat{\pi}_t}(s) \le V^{\hat\pi_0}(s) + \sum_{j=0}^{t-1} \frac{1}{1-\gamma}(C_j + E_j)
\end{equation*}
Substituting $V^{\hat{\pi}_t}(s)$ in $S_k$, we have 

\begin{equation*}
\begin{aligned}
S_k &\le \sum_{t=0}^{k-1} \left( V^{\hat\pi_0}(s) + \sum_{j=0}^{t-1} \frac{1}{1-\gamma}(C_j + E_j)\right) - k V^{\pi^*}(s)\\
 &\le kV^{\hat\pi_0}(s) + \frac{1}{1-\gamma} \sum_{t=0}^{k-1} \sum_{j=0}^{t-1} (C_j + E_j) - k V^{\pi^*}(s)
\end{aligned}
\end{equation*}

Using the relationship  \(\sum_{t=0}^{k-1} \frac{\beta_t^2}{\tilde{\mu}_t+\tilde{\mu}_{t-1}} \le \sum_{t=0}^{k-1} \frac{(t+1)^2}{|\tilde{\mu}_d|(k+1)k} \le\frac{k}{|\tilde{\mu}_d|}\), we have:
\begin{equation*}
    S_k \le k \left[ V^{\pi_0}(s) - V^{\pi^*}(s) + \frac{(M_Q + M_{\tilde{Q}})^2}{2(1-\gamma)|\tilde{\mu}_d|} + \frac{4\epsilon H_k}{1-\gamma} \right],
\end{equation*}
where $H_k = \sum_{j=0}^{k-1} \frac{1}{j+1}$ is the $k$-th harmonic number.

Rearranging Equation~\ref{equ:performance_diff_1}, we have:
\begin{equation*}
    0 \le -\beta_k \left[ \psi^{\hat{\pi}_k}(s, \hat{\pi}_{k+1}(s)) - (C_k + E_k) \right] \le \beta_k \left[ V^{\hat{\pi}_{k}}(s) - V^{\hat{\pi}_{k+1}}(s) \right] + \frac{\beta_k}{1-\gamma} (C_k + E_k).
\end{equation*}
Summing the inequality above from \(t=0\) to \(k-1\) and use the inequality for $S_k$:
\begin{equation*}
\begin{aligned}
    0 \le \sum_{t=0}^{k-1} -\beta_t \left[ \psi^{\hat{\pi}_t}(s, \hat{\pi}_{t+1}(s)) - (C_t + E_t) \right] 
    \le &
    k \left[ V^{\pi_0}(s) - V^{\pi^*}(s) \right] \\
    &+ \frac{3k(M_Q + M_{\tilde{Q}})^2}{2(1-\gamma)|\tilde{\mu}_d|} + \frac{4\epsilon}{1-\gamma}(k H_k + k).
\end{aligned}
\end{equation*}
Choosing index $\bar{k}(s)$ that minimizes the term in the summation, and using $\beta_{\bar{k}(s)} = \bar{k}(s) + 1$, we derive the bound on the advantage for Theorem~\ref{theorem:non-convex}.

To derive a bound for policy improvement. Returning Equation~\ref{equ:adv_bound} without substituting $\|\hat{\pi}_{k+1}(s) - \hat{\pi}_k(s)\|$, we split the term using the triangle inequality and combine with the advantage bound:
\begin{equation*}
\frac{\tilde{\mu}_k+\tilde{\mu}_{k-1}}{8\beta_k}  \|\hat{\pi}_{k+1}(s)-\hat{\pi}_k(s)\|^2  
\le
- \psi^{\hat\pi_k}(s, \hat\pi_{k+1}(s)) 
+ \frac{4\epsilon}{\beta_k}
+ \frac{2\beta_k(M_Q + M_{\tilde{Q}})^2}{\tilde{\mu}_k+\tilde{\mu}_{k-1}} .
\end{equation*}
Combining with Equation~\ref{equ:convergence_non_convex} and the fact that \(0\le
\frac{\beta_{\bar{k}(s)}}{\tilde{\mu}_{\bar{k}(s)}+\tilde{\mu}_{\bar{k}(s)-1}} \le \frac{1}{|\tilde{\mu}_d|(k+1)}\), we have:
\begin{equation*}
\begin{aligned}
\|\hat{\pi}_{\bar{k}(s)+1}(s)-\hat{\pi}_{\bar{k}(s)}(s)\|^2   
 &\le
\frac{8}{|\tilde{\mu}_d|(k+1)}
\bigg[
\frac{2\left[ V^{\pi_0}(s) - V^{\pi^*}(s) \right]}{k+1}  \\
&+ \frac{3(M_Q + M_{\tilde{Q}})^2}{(1-\gamma)|\tilde{\mu}_d|(k+1)} 
+ \frac{2(M_Q + M_{\tilde{Q}})^2}{|\tilde{\mu}_d|(k+1)} \\
&+ \frac{4\epsilon(H_k + 1)}{(1-\gamma)(k+1)}
+ \frac{4\epsilon}{{\bar{k}(s)+1}}
\bigg] 
\end{aligned}
\end{equation*}

\end{proof}

\newpage
\section{Experimental Setups and Results}

\subsection{Benchmark Results}
\begin{table}[h]
\centering
\caption{Benchmark results of PDA, PPO, TRPO, and NPG for MuJoCo-v4, Box2d, and OR-Env environments in 1M (or 3M) environment steps (mean ± std). The results are calculated with the average episodic reward of the last 5 epochs using 10 seeds per eenvironment and 10 tests per epoch. The Newsvendor-v0 environment uses 500 tests per epoch due to its high variance.}
\label{tab:benchmark}
\small 
\begin{tabular}{lcccc}
\toprule
Environment & NPG& PPO& TRPO& PDA\\
\midrule
HalfCheetah-v4 & 3556.4 ± 837.9& 4067.5 ± 572.3& 4496.8 ± 969.7& \textbf{5174.6 ± 686.7} \\
Ant-v4 & 2002.5 ± 300.6& 2589.5 ± 445.8& 2807.1 ± 645.5& \textbf{3568.2 ± 449.7} \\
Hopper-v4 & 1650.8 ± 726.4& 2329.7 ± 572.9& 2017.0 ± 849.0& \textbf{2944.3 ± 787.4} \\
Walker2d-v4 & 2923.2 ± 707.2& 3277.0 ± 762.6& 4128.8 ± 638.1& \textbf{4367.1 ± 915.9} \\
InvertedPendulum-v4 & \textbf{1000.0 ± 0.0} & 993.3 ± 20.2& 380.3 ± 391.6& \textbf{1000.0 ± 0.0} \\
InvertedDoublePendulum-v4 & 7967.2 ± 1205.3& 7926.8 ± 1241.8& 2723.4 ± 2393.5& \textbf{9167.5 ± 552.6}\\
Reacher-v4 & -5.7 ± 0.9& -18.7 ± 17.5& -6.1 ± 1.0 & \textbf{-4.0 ± 0.4} \\
Swimmer-v4 & 25.1 ± 9.7& 54.2 ± 14.2& 35.0 ± 28.3& \textbf{111.3 ± 29.7}\\
Humanoid-v4 & \textbf{745.1 ± 141.4}& 669.2 ± 107.4& 719.5 ± 107.0& \textbf{760.1 ± 139.9}\\
Humanoid-v4 (3M)& 4650.6 ± 686.8& 933.3 ± 294.5& 4745.1 ± 592.5& \textbf{5020.0 ± 501.5}\\
HumanoidStandup-v4 & 38364.7 ± 3926.9& 135853.7 ± 21036.7& 36737.7 ± 2413.4& \textbf{161184.5 ± 3275.5}\\
\midrule
LunarLander-v3 & 34.1 ± 59.5& \textbf{204.4 ± 44.7}& -83.0 ± 53.3& \textbf{204.7 ± 54.6}\\
BipedalWalker-v3 & 212.3 ± 39.8& \textbf{273.2 ± 45.1}& 106.3 ± 103.0& 149.0 ± 117.4 \\
\midrule
NewsvendorEnv-v0 & -2.7e7 ± 5.7e7& -1.9e5 ± 4.1e5& -3.4e7± 6.0e7& \textbf{-1.5e5 ± 1.9e5}\\
 NewsvendorEnv-v0 (3M) & -1.0e7 ±2.3e7& -37225.9 ± 48015.8& -5.3e7 ±10.0e7&\textbf{21857.9 ± 11382.1}\\
PortfolioOptEnv-v0 & 10095.3 ± 1522.9& 10062.3 ± 1230.8 &  10195.6 ± 1496.7& \textbf{10550.9 ± 1277.1}\\
InvManagementBacklogEnv-v0 & 430.4 ± 31.0& \textbf{496.4 ± 6.8}& 397.5 ± 25.4& \textbf{491.6 ± 11.8}\\
InvManagementLostSalesEnv-v0 & 431.5 ± 7.0& 447.4 ± 16.8& 432.3 ± 8.8& \textbf{472.3 ± 10.3}\\
\bottomrule
\end{tabular}
\end{table}

\subsection{Training curves for OR-Gym} \label{sec:app:or}
\begin{figure}[h]
  \vskip 0.2in
  \begin{center}
    \centerline{\includegraphics[width=\columnwidth]{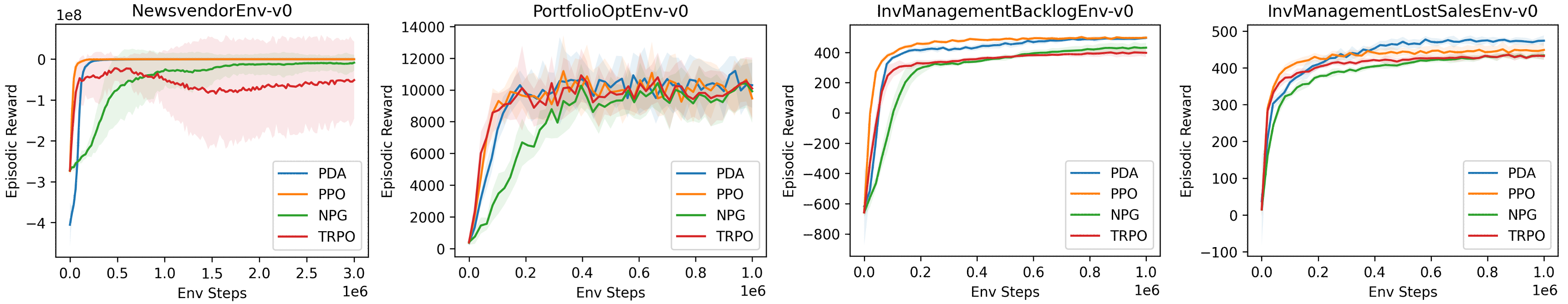}}
    \caption{
      Performance comparison of PDA, PPO, TRPO, and NPG for OR-Env environments. The curves and the shaded areas represent the mean and standard deviation across 100 test evaluations (10 seeds per environment and 10 tests per seed), respectively.
    }
  \end{center}
\end{figure}

\newpage
\subsection{Additional Benchmark Results for MuJoCo-v5}
\begin{table}[!h]
\centering
\caption{Benchmark results of PDA, PPO for MuJoCo-v5 environments in 1M (or 3M) environment steps (mean ± std) . The results are calculated with the average episodic reward of the last 5 epochs using 10 seeds per environment and 10 tests per epoch.}
\label{tab:benchmark_mujoco_v5}
\small 
\begin{tabular}{lcc}
\toprule
Environment & PPO& PDA\\
\midrule
HalfCheetah-v5& 4067.5 ± 572.3& \textbf{5174.6 ± 686.7}\\
Ant-v5 & 1115.4 ± 302.8& \textbf{2625.1 ± 368.2}\\
Hopper-v5 & 2397.8 ± 344.2& \textbf{2693.9 ± 867.0}\\
Walker2d-v5 & \textbf{3796.7 ± 908.3}& 3642.3 ± 1285.8\\
InvertedPendulum-v5 & \textbf{975.8 ± 72.6}& 916.0 ± 221.1\\
InvertedDoublePendulum-v5 & 6782.6 ± 1691.0& \textbf{9349.2 ± 9.6}\\
Reacher-v5 & -11.1 ± 12.7& \textbf{-3.9 ± 0.4}\\
Swimmer-v5 & 54.2 ± 14.2& \textbf{111.3 ± 29.7}\\
Humanoid-v5 & 671.1 ± 128.3& \textbf{793.1 ± 142.5}\\
Humanoid-v5 (3M)& 941.5 ± 288.5& \textbf{5180.1 ± 424.7}\\
HumanoidStandup-v5 & 130309.3 ± 26224.7& \textbf{161199.0 ± 4788.2}\\
\bottomrule

\end{tabular}
\end{table}

\begin{figure}[h]
  \vskip 0.2in
  \begin{center}
    \centerline{\includegraphics[width=0.92\columnwidth]{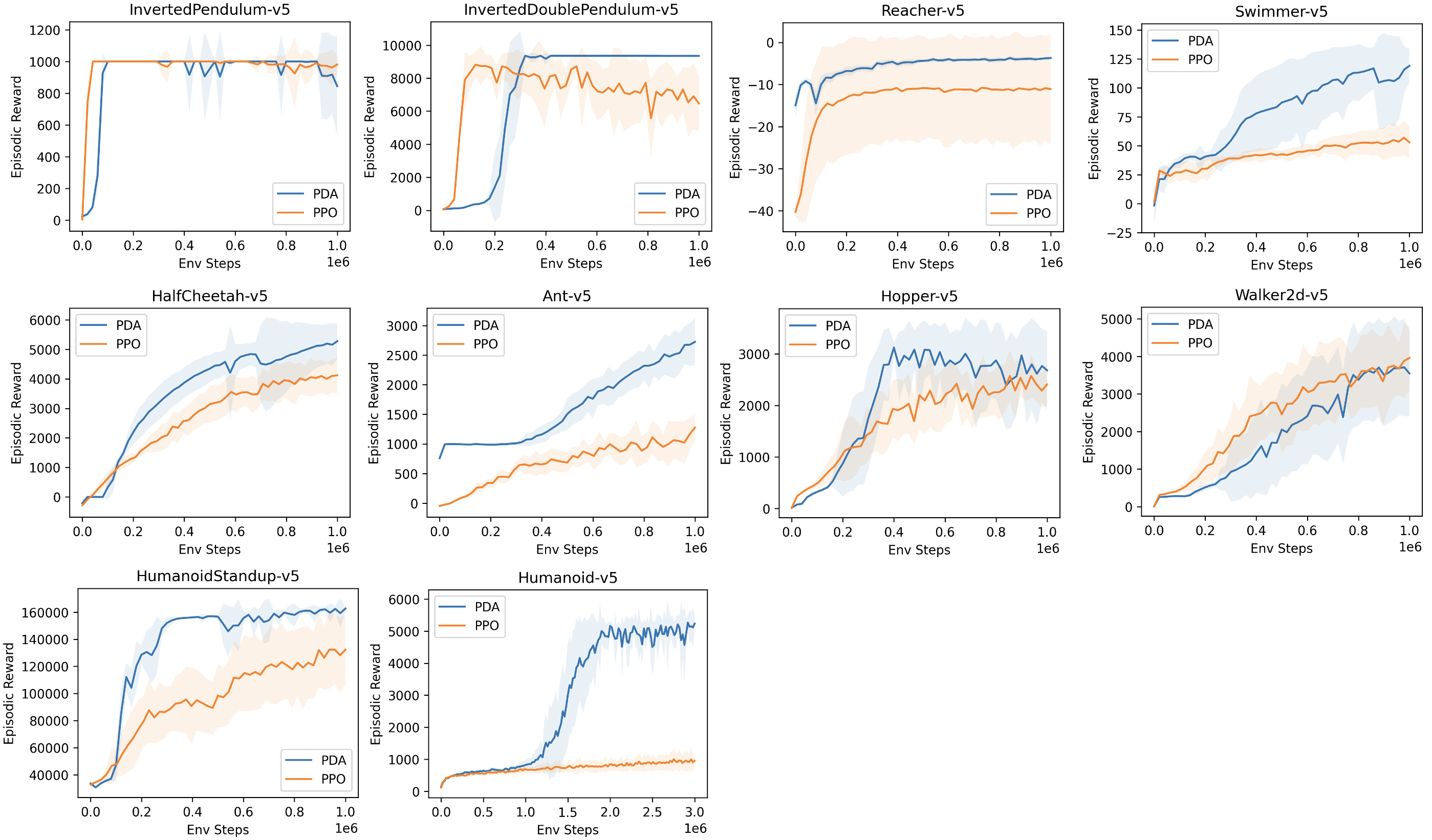}}
    \caption{
      Performance comparison of PDA and PPO for MuJoCo environments. The curves and the shaded areas represent the mean and standard deviation across 100 test evaluations (10 seeds per environment and 10 tests per seed), respectively.
    }
  \end{center}
\end{figure}

\newpage
\subsection{Algorithm Hyperparameters}

\begin{table}[h]
\centering
\caption{Hyperparameters for NPG, PDA, PPO, and TRPO algorithms}
\label{tab:hyperparameters}
\small
\begin{tabular}{lcccc}
\toprule
\textbf{Hyperparameter} & \textbf{NPG} & \textbf{PDA} & \textbf{PPO} & \textbf{TRPO} \\
Hidden sizes & $[64, 64]$ & $[64, 64]$ & $[64, 64]$ & $[64, 64]$ \\
 Activation & Tanh& Tanh& Tanh& Tanh\\
Learning rate & $1 \times 10^{-3}$ & $1 \times 10^{-3}$ & $3 \times 10^{-4}$ & $1 \times 10^{-3}$ \\
Batch size & 1024& 1000 & 64 & 1024\\
Discount factor& 0.99 & 0.99 & 0.99 & 0.99 \\
GAE $\lambda$ & 0.95 & 0.95 & 0.95 & 0.95 \\
Learning rate decay & True & None & True & True \\
 Max gradient norm & --- & 0.1 & 0.5&--- \\
\midrule
\multicolumn{5}{l}{\textit{Algorithm-Specific Hyperparameters}} \\
\midrule
\multicolumn{5}{l}{\textit{NPG}} \\
Actor step size& 0.1 & --- & --- & --- \\
Critic optimization iterations & 20 & --- & --- & --- \\
\midrule
\multicolumn{5}{l}{\textit{PDA}} \\
Step size (regularization)& --- & 0.5 & --- & --- \\
Action noise & --- & 1.3 & --- & --- \\
\midrule
\multicolumn{5}{l}{\textit{PPO}} \\
Clipping parameter& --- & --- & 0.2 & --- \\
Value function coefficient & --- & --- & 0.25 & --- \\
Entropy coefficient & --- & --- & 0.0 & --- \\
Value clipping & --- & --- & None& --- \\
Dual clipping& --- & --- & None & --- \\
\midrule
\multicolumn{5}{l}{\textit{TRPO}} \\
Max KL divergence & --- & --- & --- & 0.01 \\
Backtrack coefficient & --- & --- & --- & 0.8 \\
Max backtracks & --- & --- & --- & 10 \\
Critic optimization iterations & --- & --- & --- & 20 \\
\bottomrule
\end{tabular}
\end{table}

\subsection{Algorithm Runtime on MuJoCo-v4}

\begin{table}[h]
\centering
\caption{Average runtime (5 parallel runs) on Intel i7-14700K for 1M environment steps of training and testing. Times are reported in seconds (mean ± standard deviation).}
\small
\begin{tabular}{lcc}
\toprule
\textbf{Algorithm} & \textbf{MuJoCo-v4 (w/o humanoid)} & \textbf{MuJoCo-v4 (humanoid variants)} \\
\midrule
PDA  & $423.4 \pm 104.8$ & $1480.7 \pm 178.2$ \\
PPO  & $677.8 \pm 278.2$ & $1720.8 \pm 295.8$ \\
NPG  & $566.5 \pm 283.3$ & $1484.3 \pm 420.0$ \\
TRPO & $512.3 \pm 268.7$ & $1365.6 \pm 422.6$ \\
\bottomrule
\end{tabular}
\end{table}

\end{document}